\definecolor{SkyBlue}{RGB}{135, 206, 235}
\long\def\@makecaption#1#2{
  \vskip 0.8ex
  \setbox\@tempboxa\hbox{\small {\bf #1:} #2}
  \parindent 1.5em  
  \dimen0=\hsize
  \advance\dimen0 by -3em
  \ifdim \wd\@tempboxa >\dimen0
    \hbox to \hsize{
      \parindent 0em
      \hfil
      \parbox{\dimen0}{\def\baselinestretch{0.96}\small
        {\bf #1.} #2
      }
      \hfil}
  \else \hbox to \hsize{\hfil \box\@tempboxa \hfil}
  \fi
}
\definecolor{darkblue}{rgb}{0, 0, 0.5}
\definecolor{beaublue}{rgb}{0.74, 0.83, 0.9}
\definecolor{gainsboro}{rgb}{0.86, 0.86, 0.86}
\definecolor{kleinblue}{rgb}{0,0.18,0.65}
\newtheorem{theorem}{Theorem}[section]
\newtheorem{proposition}[theorem]{Proposition}
\newtheorem{lemma}[theorem]{Lemma}
\newtheorem{remark}[theorem]{Remark}
\def\eqref#1{equation~\ref{#1}}
\def\1{\bm{1}}
\DeclareMathAlphabet{\mathsfit}{\encodingdefault}{\sfdefault}{m}{sl}
\SetMathAlphabet{\mathsfit}{bold}{\encodingdefault}{\sfdefault}{bx}{n}
\def\gA{{\mathcal{A}}}
\def\gD{{\mathcal{D}}}
\def\gM{{\mathcal{M}}}
\newcommand*\rel@kern[1]{\kern#1\dimexpr\macc@kerna}
\newcommand*\widebar[1]{%
  \begingroup
  \def\mathaccent##1##2{%
    \rel@kern{0.8}%
    \overline{\rel@kern{-0.8}\macc@nucleus\rel@kern{0.2}}%
    \rel@kern{-0.2}%
  }%
  \macc@depth\@ne
  \let\math@bgroup\@empty \let\math@egroup\macc@set@skewchar
  \mathsurround\z@ \frozen@everymath{\mathgroup\macc@group\relax}%
  \macc@set@skewchar\relax
  \let\mathaccentV\macc@nested@a
  \macc@nested@a\relax111{#1}%
  \endgroup
}
\renewcommand*{\backrefalt}[4]{
    \ifcase #1 \relax
    \or
        (Cited on page #2)
    \else
        (Cited on pages #2)
    \fi
}
\definecolor{Gray}{gray}{0.9}
\newcommand{\ours}[0]{\text{LeGIT}\xspace}
\newcommand{\oursfull}[0]{\textbf{L}arge Languag\textbf{e} Model \textbf{G}uided \textbf{I}ntervention \textbf{T}argeting\xspace}
\definecolor{purple}{HTML}{000000}
\newcommand{\customfootnotetext}[2]{{%
      \renewcommand{\thefootnote}{#1}%
      \footnotetext[0]{#2}}}%
\newcommand{\ourtitle}{Can Large Language Models Help Experimental Design for Causal Discovery?\xspace}
\begin{document}

\title{\ourtitle}

\author{
  Junyi Li$^{*\dag}$
  \and
  Yongqiang Chen$^{*\diamond\ddag}$
  \and
  Chenxi Liu$^\P$
  \and
  Qianyi Cai$^\dag$
  \and
  Tongliang Liu$^{\S\diamond}$
  \and
  Bo Han$^\P$
  \and
  Kun Zhang$^{\diamond \ddag}$
  \and
  Hui Xiong$^\dag$
}

\date{
  $^\dag$ The Hong Kong University of Science and Technology (Guangzhou)\\
  $^\diamond$ MBZUAI \quad  $\P$ Hong Kong Baptist University\\
  $^\S$ The University of Sydney \quad
  $^\ddag$ Carnegie Mellon University\\
  \vspace{0.2in}
  \href{https://causalcoat.github.io/legit}{\color{magenta}https://causalcoat.github.io/legit}
  \vspace{0.3in}
}

\maketitle
\customfootnotetext{$*$}{These authors contributed equally.}

\begin{abstract}
Designing proper experiments and selecting optimal intervention targets is a longstanding problem in scientific or causal discovery. 
Identifying the underlying causal structure from observational data alone is inherently difficult.
Obtaining interventional data, on the other hand, is crucial to causal discovery, yet it is usually expensive and time-consuming to gather sufficient interventional data to facilitate causal discovery.
Previous approaches commonly utilize uncertainty or gradient signals to determine the intervention targets. However, numerical-based approaches may yield suboptimal results due to the inaccurate estimation of the guiding signals at the beginning when with limited interventional data. 
In this work, we investigate a different approach, whether we can leverage Large Language Models (LLMs) to assist with the intervention targeting in causal discovery by making use of the rich world knowledge about the experimental design in LLMs.
Specifically, we present \oursfull (\ours) -- a robust framework that effectively incorporates LLMs to augment existing numerical approaches for the intervention targeting in causal discovery. 
Across $4$ realistic benchmark scales, \ours demonstrates significant improvements and robustness over existing methods and even surpasses humans, which demonstrates the usefulness of LLMs in assisting with experimental design for scientific discovery.

\end{abstract}

\section{Introduction}

Science originates along with discovering new causal knowledge with \textit{interventional experiments inspired by observations}~\citep{sci_revolution}. The art of finding causal relations from different interventions is then summarized and improved with statistical methods~\citep{book_why,auto_causal,Glymour2019ReviewOC}. Identifying and utilizing causal relations is fundamental to numerous applications, including biology~\citep{causal_learn_survey} and financial systems~\citep{Dong2023OnTT}. Despite the wide deployment of causal discovery methods, uncovering the underlying causal connections merely based on observational data alone is typically challenging due to limitations in identifiability. Mitigating this limitation usually requires additional interventional data obtained by perturbing part of the causal system to overcome the limited identifiability issue ~\citep{causation_2nd}.

However, collecting interventional data is expensive and time-consuming, as it usually involves a physical process of a real-world system~\citep{Cherry2012ReprogrammingCI}. Consequently, \textit{both the number of samples and the intervention targets are significantly limited in the experimental design in the real world}~\citep{Tong2001ActiveLF}. Previous approaches usually rely on uncertainty~\citep{Lindley1956OnAM} or information theoretic metric to maximize the utility of an experiment~\citep{tigas2022interventions}.
Recently, leveraging gradient signals for intervention targeting has gained significant success~\citep{olko2023git}, as it naturally fits into various gradient-based causal discovery methods.
Despite some success, both uncertainty-based and gradient-based approaches may still suffer from suboptimality, as the estimation of the signals is usually noisy. Especially when with limited interventional data, the inaccurate estimation of the scores can easily mislead the intervention targeting and the subsequent causal discovery.
The emergence of large language models (LLMs)~\citep{openai2023gpt4}, provides an opportunity to incorporate extensive world knowledge about experimental design into the intervention targeting process. 
It therefore raises an intriguing research question:

\begin{tcolorbox}[colframe=black, colback=white, coltitle=black,halign=center, valign=center]

\centering
\textit{Can we leverage LLMs for intervention targeting and do LLMs really help with it?}
\end{tcolorbox}

\begin{figure*}[tbp
]\vspace{-0.1in}
\begin{center} 
\includegraphics[width=1\textwidth]{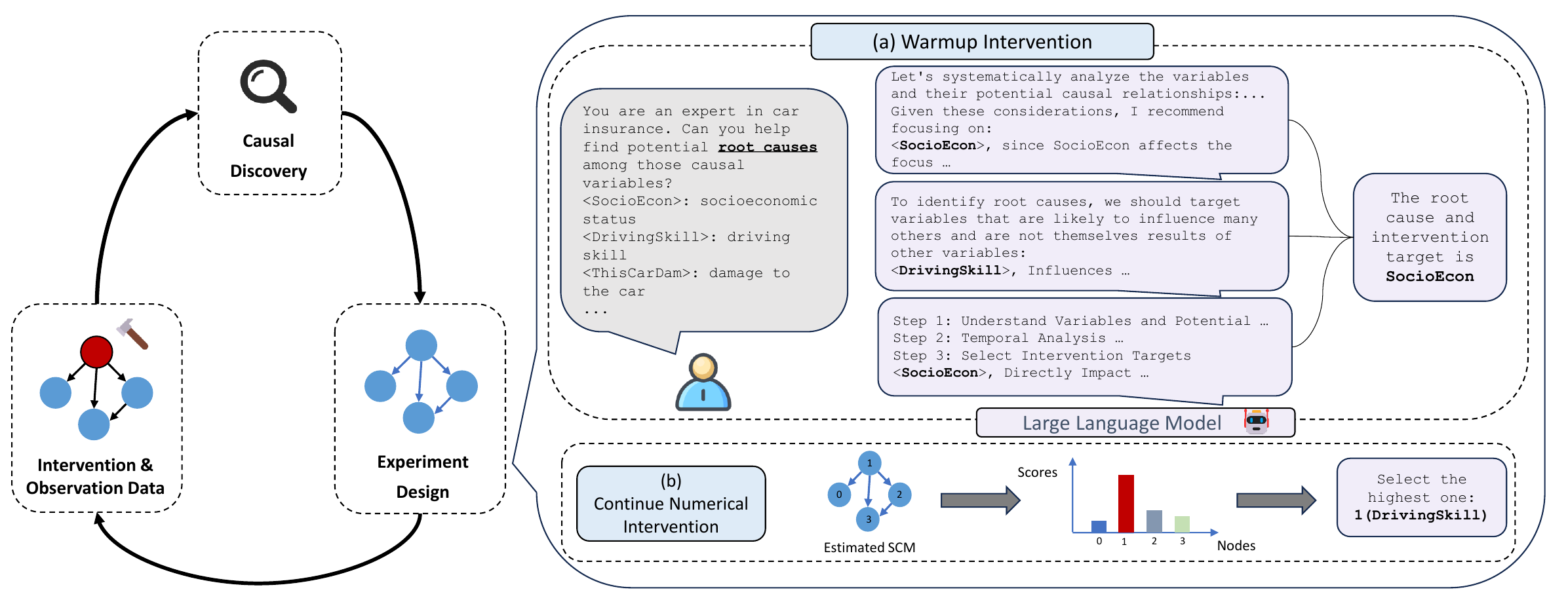}
\end{center}

\caption{Illustration of the \ours framework. The left side represents the loop of Online Causal Discovery, while the right side illustrates the experiment design process. In Step (a), Large Language Models (LLMs) warm up the causal discovery process by leveraging world knowledge and aligning it with the experiment's meta-information. This enables the identification of clear causal structures, which, in Step (b), guide previous methods to pinpoint informative intervention targets effectively.}
\label{fig:motivation}
\end{figure*}

Recent explorations into the use of LLMs for various causal learning and reasoning tasks suggest that these models may already encapsulate substantial domain knowledge~\citep{causal_llm_frontier,lampinen2023passive,cma}. 
LLMs have demonstrated the ability to process the meta-information encoded in natural language and leverage the meta-information to reason for the causality, which was considered restricted to humans~\citep{Gopnik2004ATO,Trott2022DoLL,Sahu2022UnpackingLL}. 
Furthermore, LLMs have exhibited remarkable potential in advancing complex scientific discovery~\citep{ai4science2023impact}.
Additionally, discussions about the limitations of LLMs in understanding causality were also raised in the community~\citep{causalparrots,corr2cau,understand_causality_llm}.
This underscores the need for a robust approach that optimally extracts the world knowledge embedded in LLMs about experimental design while mitigating the risks of being misled by their hallucinations regarding causality~\citep{llm_hallucination}.

To this end, we present a new framework called \oursfull (\ours), designed to maximize while robustly leveraging the knowledge in LLMs to assist with the intervention targeting. 
Shown as in Fig.~\ref{fig:motivation}, at the beginning of the causal discovery, the numerical-based methods have limited numerical knowledge about the underlying causal system to use due to the limited data. Consequently, the estimated signals tend to be noisy and misleading. In contrast, LLMs can leverage the meta-information about the causal system and relate the learned world knowledge to identify high-potential intervening targets.
After obtaining a relatively clearer causal graph, LLMs may not be able to provide sufficient guidance. Therefore, similar to humans, \ours leverages numerical methods to select the intervening targets.
Our contributions can be summarized as follows:
\begin{itemize}[leftmargin=*,itemsep=2pt, parsep=0pt, topsep=2pt]
    \item To the best of our knowledge, we are the first to investigate the use of LLMs in the experimental design to select intervention targets for causal discovery.
    \item We propose a novel framework called \ours that combines the advantages of both the previous numerical methods as well as the LLMs to facilitate the intervening targeting.
    \item We conduct extensive experiments with $4$ real-world benchmarks and verify that \ours can empower numerical-based methods and even human baseline.
    \item We highlight the promise of LLMs in causal and scientific discovery, that LLMs can effectively incorporate world knowledge, making them valuable cost-efficient complements to humans.
\end{itemize}

\section{Related Work}
\label{sec:related}

\paragraph{Intervention/Experiment Design} Scientific progress in causal discovery is often driven by interventional experiments inspired by observational insights~\citep{sci_revolution}. Traditional methods focused on designing effective experiments to establish causal links, while statistical approaches aimed to automate causal inference from observational data \citep{book_why,causation_2nd}. However, observational data alone is insufficient for identifying causal structures, and interventional data is costly to collect \citep{causation_2nd}. To address these challenges, several methods for optimal intervention design have been developed.

Active Intervention Targeting (AIT) selects intervention targets using an $F$-test inspired criterion, evaluating discrepancies in interventional sample distributions from a posterior distribution of graphs \citep{scherrer2021learning}. 
Causal Bayesian Experimental Design (CBED) uses Bayesian Optimal Experimental Design to select interventions that maximize mutual information (MI) between new data and existing graph beliefs, with MI estimated via a BALD-like method \citep{tigas2022interventions,houlsby2011}. 
Gradient-based Intervention Targeting (GIT)  \citep{olko2023git} leverages gradient information to determine interventions that maximize impact on causal parameter updates, which is particularly advantageous in low-data settings. 
In our work, we explore leveraging these advanced intervention strategies within the framework of LLMs to determine whether LLMs can effectively engage in experimental design for causal discovery, pushing the boundaries of what automated, data-driven causal inference can achieve.

\paragraph{Causal Discovery With LLMs} Recent advancements in large language models (LLMs) like ChatGPT have opened new opportunities in causal inference by incorporating domain knowledge, common sense, and contextual reasoning into the causal discovery process~\citep{causal_llm_frontier}. LLMs have demonstrated capabilities across Pearl's ladder of causation—association, intervention, and counterfactuals—bridging gaps that traditional models have with high-level causal reasoning. They have shown promising results in pairwise causal discovery tasks by utilizing semantic information not accessible through numerical data alone~\citep{jiralerspong2024efficient}.

Despite these advances, challenges remain. LLMs can sometimes behave like ``causal parrots'', repeating learned associations without demonstrating true causal reasoning~\citep{causalparrots}. Moreover, their performance varies significantly depending on task complexity, with limited success in advanced causal reasoning such as full graph discovery and counterfactual analysis~\citep{understand_causality_llm,corr2cau,imperfect_llm_expert}.
Another promising line of work integrates LLMs with traditional causal discovery methods to leverage their complementary strengths ~\citep{imperfect_llm_expert,cma,Liu2024DiscoveryOT}. This hybrid approach has shown improved performance in constructing causal graphs, benefiting from LLMs' understanding of language context and traditional methods' data-driven precision.

While prior studies emphasize the role of LLMs in causal analysis, the question of whether LLMs can meaningfully contribute to experimental design in causal discovery remains largely unaddressed. Experimental design encompasses proposing interventions, predicting outcomes, and assessing experimental strategies—tasks that extend beyond basic causal inference. This paper seeks to bridge this gap by investigating the potential of LLMs to support experimental design, exploring their unique value, and critically evaluating their strengths and limitations in guiding causal experiments.

\section{Preliminaries}
\label{sec:prelim}

We begin by briefly introducing the preliminaries and notations in the online causal discovery setting~\citep{olko2023git}.

\subsection{Causal Structure Discovery}
The causal relations between different variables can be formulated using the structural causal models (SCM)~\citep{book_why,causation_2nd,Glymour2019ReviewOC}. More specifically, in an SCM, we are given $n$ endogenous variables $X=(X_1,...,X_n)$, where the generation process of each variable can be expressed as $X_i=f_i(PA_i,U_i)$ where $PA_i$ is the set of variables that are the causal parents of $X_i$, and $U_i$ is the external independent noise when generating $X_i$.

The causal relations between $n$ variables can be further characterized via a direct acyclic graph (DAG), $G=(V,E)$, where $V=\{1,...,n\}$ is the set nodes corresponding to the set of random variables $\{X_1,...,X_n\}$. Each edge $(i,j)\in E$ in the edge set $E$ refers to the relation of direct cause $X_i\in PA_j$, i.e., $X_i$ is one of the causes of the variable $X_j$. The joint distribution of all the variables associated with the DAG can be expressed as $P(X_1,...,X_n)=\Pi_{i=1}^nP(X_i|PA_i)$.

Causal structure discovery aims to identify the underlying DAG $G$. However, when given only the joint observed distribution $P(X_1,...,X_n)$, it does not uniquely determine a DAG, as there might be different DAGs that can generate the same joint distribution. On the contrary, the observational data can merely determine a set of DAGs up to a Markov Equivalence Class (MEC)~\citep{causation_2nd}.

\begin{wrapfigure}{r}{0.68\textwidth}
    \begin{minipage}{0.68\textwidth}
    \vspace{-0.3in}
\begin{algorithm}[H]
\caption{\textsc{Online Causal Discovery}~\citep{olko2023git}}
\label{alg:online_causal_discovery}
\begin{algorithmic}[1]
    \INPUT{causal discovery algorithm $\mathcal{A}$ (e.g., ENCO,), intervention targeting {method $\mathcal{M}$},
    number of data acquisition rounds $T$, observational dataset $\mathcal{D}_{obs}$}
    \OUTPUT{final parameters of graph model: $\varphi_T$ and CausalDAG: $\mathbb{P}{(G)}$}
    \STATE $\mathcal{D}_{int} \gets \varnothing$
    \STATE Fit graph model $\varphi_0$ with algorithm  $\mathcal{A}$ on $\mathcal{D}_{obs}$ 
    \FOR{round $i = 1,2,\ldots,T$}
      \STATE $I \gets \text{ generate intervention targets using } {\mathcal{M}}$
      \STATE $\mathcal{D}_{int}^I \leftarrow \text{ query for data from interventions } I$
      \STATE $\mathcal{D}_{int} \leftarrow \mathcal{D}_{int} \cup \mathcal{D}_{int}^I$
      \STATE Fit $\varphi_i$ with algorithm  $\mathcal{A}$ \text{ on } $\mathcal{D}_{int} \text{ and } \mathcal{D}_{obs}$ 
    \ENDFOR
\end{algorithmic}
\end{algorithm}
\end{minipage}
\end{wrapfigure}

\subsection{Online Causal Discovery}
To identify the underlying ground truth DAG from the MEC, interventional data is widely incorporated into the causal discovery process~\citep{Tong2001ActiveLF,Hauser2011CharacterizationAG,Ke2019LearningNC}. Hence, online causal discovery is proposed to overcome the issue~\citep{Ke2019LearningNC}. 

As given in Algorithm~\ref{alg:online_causal_discovery}, an online causal discovery procedure is built upon a causal discovery algorithm $\gA$ that is able to leverage both the observational data and interventional data to recover the underlying causal structure.
More formally, the interventional data is usually obtained through single-node intervention on some causal variable $X_i$. The intervention will replace the generation process of $X_i$ with a new distribution, for which we denote as $\widehat{P}(X_i|PA_i)$~\citep{book_why}. Then, it yields an interventional distribution:
\begin{equation}
    P_i(X)=\widehat{P}(X_i|PA_i)\Pi_{j\neq i}P(X_j|PA_j),
\end{equation}

We use hard interventions in this study for simplicity and consistency, enabling causal structure identification and effectiveness assessment. Soft interventions, which adjust variable dependencies without removing them, are beyond this study's scope but may be explored in future research.

The online discovery will proceed by $T$ rounds. At the beginning of the first round, an initial graph model $\phi_0$ is fitted based on the observational data. Then, in the follow-up $T$ rounds, an intervention target $I$ will be selected using some intervention targeting method.
For each selected $I$, a batch of samples will be obtained and be integrated into all interventional data to execute the causal discovery algorithm $\gA$. After $T$ rounds, the fitted DAG will be the final output.

Previous approaches may use different intervention targeting methods. For example, \citep{scherrer2021learning} proposes Active Intervention Targeting (AIT) to select the desired intervention targets based on the $F$-test. \citep{tigas2022interventions} approximate the posterior distribution on all possible DAGs and leverage Bayesian Optimal Experimental Design to select the most informative intervention targets. 

\begin{figure}[t]
     \centering
    \includegraphics[width=0.9\textwidth]{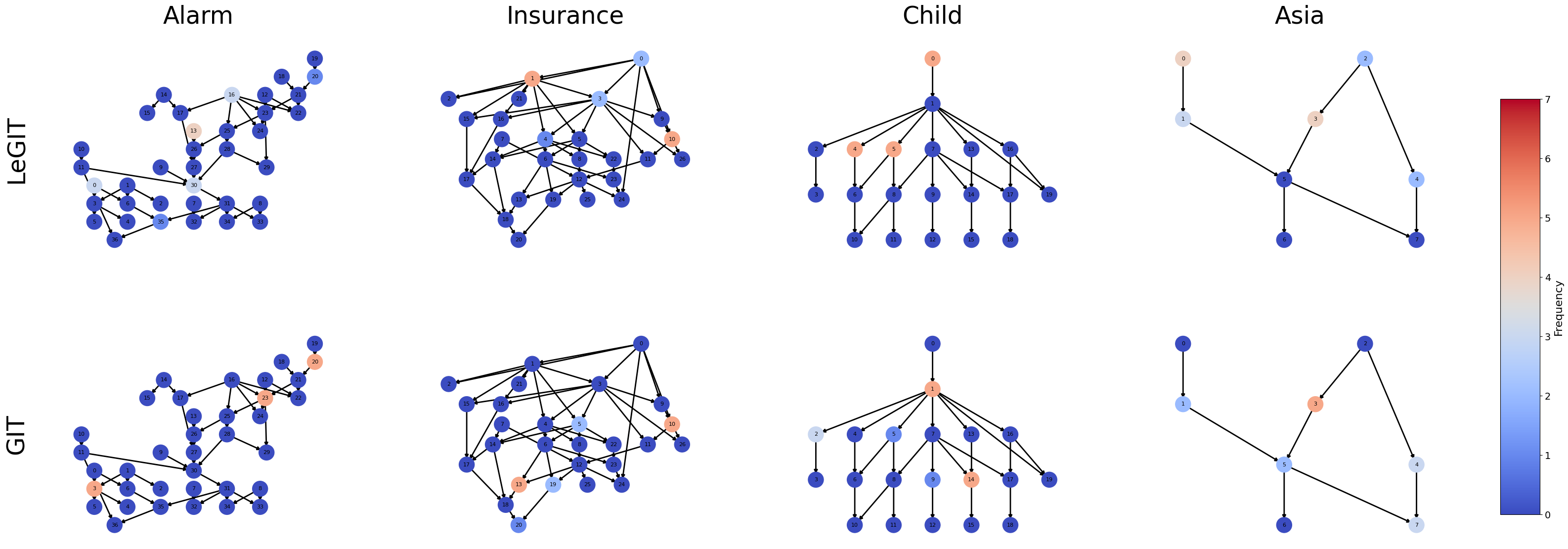}
    \caption{At the initial stage of the online causal discovery, the intervention targets from LLM-based selection and gradient-based selection.} 
    \label{fig:epoch1}
    \vspace{-0.2cm}
\end{figure}

Different from the Bayesian approaches, \citep{olko2023git} propose Gradient-based Intervention Targeting (GIT), which leverages the gradient signals from the gradient-based causal discovery methods to estimate the utility of each intervention target via hallucinated gradients~\citep{Ash2020Deep}. Due to the natural combination of the gradient-based causal discovery methods such as ENCO~\citep{lippe2022enco} and the GIT method, GIT achieves significant performance improvements over previous Bayesian-based approaches. Therefore, our follow-up discussion will center on the gradient-based approaches, i.e., the GIT method based on ENCO, as the state-of-the-art numerical method for online causal discovery.

\section{Methodology}
\label{sec:method}

\subsection{Challenges in Existing Intervention Targeting}

Despite the success of GIT method, similar to other estimation-based approaches, GIT is highly sensitive to the accuracy of the gradient estimation and estimated causal graphs which can be extremely noisy in the early rounds of an experiment. Therefore, we might mistakenly choose a variable that exerts minimal influence on the system, wasting valuable intervention budgets and misdirecting subsequent learning steps.

\begin{wrapfigure}{r}{0.5\textwidth}
    \centering
    \includegraphics[width=0.48\textwidth]{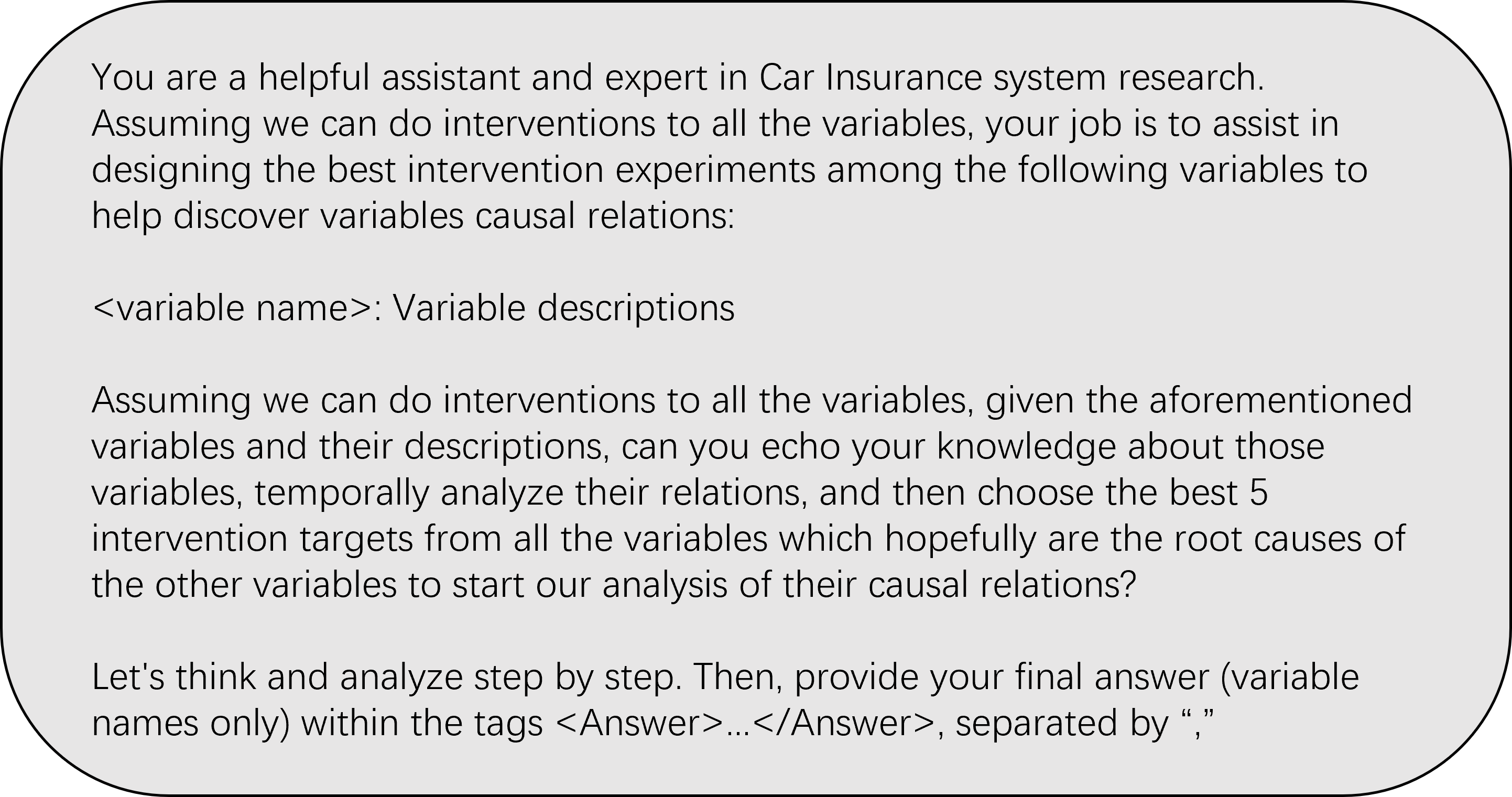}
    \caption{Prompt template at warmup stage.}
    \label{fig:prompt_template}
\end{wrapfigure}

To demonstrate the above issue and the challenges in the existing intervention targeting methods more concretely, we consider four realistic causal discovery benchmarks~\citep{JSSv035i03}, i.e., Alarm, Insurance, Child, and Asia and plot the distribution of the intervention target at the initial stage.

As given in Fig.~\ref{fig:epoch1}, it can be found that the success of GIT varies across different datasets. Intuitively, at the beginning of the intervention, intervening on variables that affect lots of other variables can bring more information about the system~\citep{Lindley1956OnAM}. 
In the Asia, Alarm dataset, the selected intervention targets are influential nodes. However, in insurance, the selected nodes only influence a few other nodes. Intervening on such targets with limited influence may lead to significant resource waste and further misdirect subsequent online causal discovery rounds.

In contrast, we construct prompts to inquire LLMs about the root causes in this system, given only the meta-information such as simple variable descriptions. The specific prompts are given in Fig.~\ref{fig:prompt_template}, and the suggested intervening targets are also highlighted in Fig.~\ref{fig:epoch1}. It can be found that given only the meta-information, LLMs are able to relate the rich world knowledge to locate the desired influential nodes.

\subsection{\oursfull}
Motivated by the aforementioned experiments, we present our framework \oursfull (\ours) to combine the strengths of both numerical-based methods and LLMs to facilitate the intervention targeting. The description of the algorithm of \ours is given in Algorithm~\ref{alg:legit}.
\ours consists of four stages.

\begin{figure}[htbp]
    \begin{minipage}{\textwidth}\vspace{-0.15in}
\begin{algorithm}[H]
\caption{\textsc{\ours: \oursfull} }
\label{alg:legit}
\begin{algorithmic}[1]
    \INPUT{Causal discovery algorithm for Intervention Data $\mathcal{A}$ (e.g., ENCO); Intervention Score targeting method $\mathcal{M_0}$ (e.g GIT); LLM for root cause proposal $\Psi$ , number of data acquisition rounds $T$; Observational dataset $\mathcal{D}_{obs}$; Graph Node List $V$; Warmup Epoch $T_{warmup}$; Bootstrapped Search Epoch $T_{bootstrapped}$}
    \OUTPUT{Final parameters of graph model: $\varphi_T$ and CausalDAG: $\mathbb{P}{(G)}$}
    \STATE \texttt{//Get Warmup List from LLM}\\
    $\mathcal{D}_{warmup} \gets \Psi(V,T_{warmup})$  
    \FOR{round $i = 1,2,\ldots,T$}
       \IF{i $<= T_{warmup}$}
        \STATE    $D_{int}^I \gets \mathcal{D}_{warmup}[i]$
       \ELSIF{i $= T_{warmup} +1$}
       \STATE \texttt{// Get the Isolated Nodes List}\\
            $V_{isolated} \gets$ isolated node from $\mathbb{P}{(G_i)}$ 
            \STATE \small \texttt{//Get Bootstrapped warmup Intervention Target from isolated Nodes}\\
            $\mathcal{D}_{bootstrapped} \gets \Psi(V_{isolated},T_{bootstrapped})$  
            \STATE $D_{int}^I \gets \mathcal{D}_{isolated}[i - T_{bootstrapped}]$
       \ELSIF{$T_{warmup} < i <= T_{warmup} + T_{bootstrapped}$}
            \STATE $D_{int}^I \gets \mathcal{D}_{bootstrapped}[i - T_{warmup}]$
        \ELSIF{$T_{warmup} + T_{missing} < i <= 2(T_{warmup} + T_{missing})$}
        \STATE \texttt{//Double Selection LLM'S List}\\
            $D_{int}^I \gets \mathcal({D}_{warmup}+D_{missing})[i - T_{warmup} - T_{missing}]$ 
        \ELSE
            \STATE $D_{int}^I \gets \text{ generate intervention targets using } {\mathcal{M_0}}$
       \ENDIF   
       \STATE$\mathcal{D}_{int} \leftarrow \mathcal{D}_{int} \cup \mathcal{D}_{int}^I$
      \STATE Fit $\varphi_i$ with algorithm  $\mathcal{A}$ \text{ on } $\mathcal{D}_{int} \text{ and } \mathcal{D}_{obs}$ 
    \ENDFOR
\end{algorithmic}
\end{algorithm}
\end{minipage}
\vspace{-0.1in}
\end{figure}

\paragraph{Warmup Stage} Since at the very beginning of the online causal discovery, numerical-based estimations are noisy and easily mislead the online causal discovery, we begin by prompting LLMs to relate the pre-trained knowledge, analyze the variable description, and suggest influential candidates. The prompt template is given in Fig.~\ref{fig:prompt_template}. The prompting will give the beginning list of intervention targets $\gD_{warmup}$. From $\gD_{warmup}$, we will select $T_{warmup}$ variables to obtain a basic map of the underlying causal system. For a robust performance, we perform self-consistency prompt skill~\citep{wang2022self} to get the final targets for a robust performance. 

\paragraph{Bootstrapped Stage} 

Although the first warmup stage yields a basic structure of the underlying causal system, due to the intrinsic limitations of LLMs such as limited context length~\citep{lost_in_middle} and hallucination~\citep{llm_hallucination}, LLMs may only focus on a subset of the variables and find the influential nodes therein. Nevertheless, when the number of causal variables is large, LLMs tend to give an incomplete set of influential nodes. 
Therefore, we further incorporate a second warmup stage, to bootstrap the use of LLM's world knowledge in early intervention targeting.

More concretely, we leverage the intermediate causal discovery results $\phi_{T_{warmup}}$ after the $T_{warmup}$ rounds and examine the left variables that have not been involved in $\phi_{T_{warmup}}$. Then, we further prompt LLMs to give more focus on the left set of variables and to find the influential variables that were missing in previous rounds. 

\paragraph{Double Selection Stage}

After getting the warmup and missing intervention targt, we perform a double selection to ensure the robustness of the discovered causal structure while minimizing unnecessary interventions ~\citep{lippe2022enco}.

\paragraph{Continual Intervention Stage} After the three warmup stages, we have already obtained relatively clearer yet complicated causal graphs. Even for humans, it is hard to determine the best experimental design. Therefore, we switch to using the numerical-based methods to continue to consume the remaining intervention budgets.

\vspace{-2mm}

\subsection{Theoretical Discussion}
\label{sec:llmocd_theory}
After setting up the \ours algorithm, we discuss the convergence of \ours. 
Since \ours ends up with a numerical-based method for concluding online causal discovery, it follows intuitively that, like other numerical-based methods (e.g., GIT~\citep{olko2023git}), and an effective causal discovery algorithm, such as ENCO~\citep{lippe2022enco}, \ours can converge, further details available in Appendix. Nevertheless, due to its enhanced strategy in \ours, we empirically observe that \ours can converge to a better solution when compared to the same numerical-based method without LLMs involved.

\subsection{Practical Discussion}
\label{sec:llmocd_impl}
Consistent with prior work, we mainly adopt GIT as the numerical-based method $\gM$, and ENCO as the gradient-based causal discovery method. However, as also suggested in GIT~\citep{olko2023git}, ENCO can also be switched to other gradient-based methods. Additionally, \ours is also compatible with other numerical-based approaches.

\section{Experiments}

In this section, we conduct extensive experiments to evaluate \ours on real-world datasets and compare \ours against various baselines in intervention selection and humans. We provide a brief overview of the experimental setups here, with further details available in Appendix.

\subsection{Experimental Setup}
\label{section:Exp}

\paragraph{Datasets} Specifically, we use four real-world benchmark datasets along with their corresponding ground truth causal graphs from the BN repository~\citep{JSSv035i03}: \textit{Asia}, \textit{Child}, \textit{Insurance}, and \textit{Alarm}. It provides causal graphs derived from real-world applications that are widely recognized as benchmark datasets. These datasets encompass a diverse set of professional scenarios, ranging from car insurance to medical systems, which are crucial for enhancing the knowledge captured by large language models (LLMs).
\begin{enumerate}
\vspace{-0.01in}
\item    \textit{Asia}~\citep{asia} dataset consists of 8 variables related to a lung cancer diagnosis system, with 8 edges.
\vspace{-0.01in}
\item  \textit{Child}~\citep{CHILD} dataset contains 20 nodes and 25 edges, modeling congenital heart disease in newborns.
\vspace{-0.01in}
\item  \textit{Insurance}~\citep{insurance} dataset includes 27 nodes and 52 edges, representing a car insurance system.
\vspace{-0.01in}
\item  \textit{Alarm}~\citep{Beinlich1989TheAM} dataset comprises 37 nodes and 46 edges, simulating an alarm message system for patient monitoring.
\end{enumerate}
\paragraph{Baselines}
We compare \textbf{\ours} against different online causal discovery algorithms \textbf{GIT}~\citep{olko2023git}, \textbf{AIT}~\citep{scherrer2021learning}, \textbf{CBED}~\citep{tigas2022interventions} as selection strategies for online active learning interventions, as well as three random baselines and human baseline:
\begin{enumerate}
\vspace{-0.05in}
\item    \textbf{Random Choice}:
A target node is selected uniformly at random from the set of all nodes at each step;
\vspace{-0.05in}
\item  \textbf{Round Robin}: 
A target node is chosen randomly from the unvisited nodes at each step. Once all nodes are selected, the visitation counts are reset.
\vspace{-0.05in}
\item  \textbf{Degree Prob Sample}:
A target node is randomly chosen from all nodes, with selection probability normalized by each node's out-degree.
\vspace{-0.05in}
\item \textbf{Human}: 
We engaged five master's/Ph.D.-level individuals, presenting them with the same information and process as provided to the LLMs.
\end{enumerate}
Among the baselines, Degree Prob Sample can be considered as an oracle to LLM that adopts the out-degree of each node in the ground truth DAG. And compare the human analysts to find the unique contributions of LLMs.

\begin{table}[!t]
\centering
\caption{Average SHD, SID, and BSF with standard deviation (over 5 seeds) for real-world data ($T = 33$ rounds, $|D_{int}^I| = 32, N = 1056$).}
\resizebox{\linewidth}{!}{
\begin{tabular}{l ccc ccc ccc ccc}
\toprule
\multirow{2}{*}{Methods}
& \multicolumn{3}{c}{\textbf{Alarm}} 
& \multicolumn{3}{c}{\textbf{Insurance}} 
& \multicolumn{3}{c}{\textbf{Child}} 
& \multicolumn{3}{c}{\textbf{Asia}} \\
\cmidrule(lr){2-4}\cmidrule(lr){5-7}\cmidrule(lr){8-10}\cmidrule(lr){11-13}
& \textbf{SHD$\downarrow$} & \textbf{SID$\downarrow$} & \textbf{BSF$\uparrow$}  
& \textbf{SHD$\downarrow$} & \textbf{SID$\downarrow$} & \textbf{BSF$\uparrow$}  
& \textbf{SHD$\downarrow$} & \textbf{SID$\downarrow$} & \textbf{BSF$\uparrow$}  
& \textbf{SHD$\downarrow$} & \textbf{SID$\downarrow$} & \textbf{BSF$\uparrow$} \\
\midrule
CBED          
& 28.20 {\scriptsize ± 4.31}   & 213.80 {\scriptsize ± 42.44}   & 0.8053 {\scriptsize ± 0.06}
& 21.60 {\scriptsize ± 4.63}   & 260.00 {\scriptsize ± 31.83}   & 0.7529 {\scriptsize ± 0.04}
& 5.40 {\scriptsize ± 2.06}    & 44.40 {\scriptsize ± 18.51}    & 0.9150 {\scriptsize ± 0.05}
& 2.20 {\scriptsize ± 1.47}    & 4.60 {\scriptsize ± 3.38}      & 0.7833 {\scriptsize ± 0.08} \\[2pt]

AIT           
& 32.80 {\scriptsize ± 8.42}   & 204.60 {\scriptsize ± 52.09}   & 0.7214 {\scriptsize ± 0.05}
& 24.20 {\scriptsize ± 7.47}   & 312.40 {\scriptsize ± 87.50}   & 0.6711 {\scriptsize ± 0.11}
& 9.00 {\scriptsize ± 3.29}    & 52.20 {\scriptsize ± 21.03}    & 0.8752 {\scriptsize ± 0.04}
& 1.80 {\scriptsize ± 0.75}    & 6.60 {\scriptsize ± 4.84}      & 0.7833 {\scriptsize ± 0.08} \\[2pt]

Random Choice 
& 38.80 {\scriptsize ± 3.54}   & 204.40 {\scriptsize ± 58.15}   & 0.7430 {\scriptsize ± 0.08}
& 26.00 {\scriptsize ± 3.63}   & 323.80 {\scriptsize ± 14.96}   & 0.7137 {\scriptsize ± 0.02}
& 5.40 {\scriptsize ± 1.20}    & 51.00 {\scriptsize ± 17.11}    & 0.9396 {\scriptsize ± 0.04}
& 1.20 {\scriptsize ± 0.40}    & 2.20 {\scriptsize ± 2.40}      & 0.8708 {\scriptsize ± 0.01} \\[2pt]

Round Robin   
& 25.00 {\scriptsize ± 1.26}   & \textbf{118.60 {\scriptsize ± 21.78}}  & 0.9301 {\scriptsize ± 0.02}
& 17.40 {\scriptsize ± 4.54}   & \underline{232.20 {\scriptsize ± 27.23}}           & \underline{0.8042 {\scriptsize ± 0.02}}
& 3.40 {\scriptsize ± 2.50}    & 23.00 {\scriptsize ± 14.39}            & 0.9824 {\scriptsize ± 0.02}
& 1.40 {\scriptsize ± 0.49}    & 2.20 {\scriptsize ± 1.47}              & 0.8250 {\scriptsize ± 0.06} \\[2pt]

Degree Prob   
& 29.40 {\scriptsize ± 4.67}   & 144.60 {\scriptsize ± 49.77}   & 0.7798 {\scriptsize ± 0.06}
& 25.80 {\scriptsize ± 2.93}   & 305.20 {\scriptsize ± 17.45}   & 0.7054 {\scriptsize ± 0.03}
& 6.20 {\scriptsize ± 2.48}    & 36.20 {\scriptsize ± 16.35}    & 0.8842 {\scriptsize ± 0.06}
& \underline{1.00 {\scriptsize ± 0.00}}    & \underline{1.00 {\scriptsize ± 0.00}}      & \underline{0.8750 {\scriptsize ± 0.00}} \\[2pt]

GIT           
& \underline{19.60 {\scriptsize ± 3.77}}  & 131.40 {\scriptsize ± 47.66}   & \underline{0.9295 {\scriptsize ± 0.02}}
& \underline{16.40 {\scriptsize ± 3.14}}  & 243.80 {\scriptsize ± 28.72}   & 0.7960 {\scriptsize ± 0.02}
& \underline{2.80 {\scriptsize ± 0.75}}   & \underline{20.40 {\scriptsize ± 12.50}}    & \underline{0.9841 {\scriptsize ± 0.01}}
& \underline{1.00 {\scriptsize ± 0.00}}   & \underline{1.00 {\scriptsize ± 0.00}}      & \underline{0.8750 {\scriptsize ± 0.00}} \\[2pt] 
\midrule
\textbf{Human}
& 22.60 {\scriptsize ± 5.43}   & 133.20 {\scriptsize ± 27.01}   & 0.8976 {\scriptsize ± 0.02}
& \underline{14.20 {\scriptsize ± 3.43}}  & 232.20 {\scriptsize ± 40.74}   & 0.8065 {\scriptsize ± 0.03}
& \textbf{2.00 {\scriptsize ± 0.63}}      & \textbf{18.80 {\scriptsize ± 8.42}}   & \textbf{0.9944 {\scriptsize ± 0.00}}
& 1.40 {\scriptsize ± 0.49}    & 3.60 {\scriptsize ± 3.20}      & 0.8667 {\scriptsize ± 0.01} \\[2pt]

\rowcolor{SkyBlue!20}
\textbf{\ours} 
& \textbf{17.40 {\scriptsize ± 3.61}}   & \textbf{121.00 {\scriptsize ± 38.27}}   & \textbf{0.9362 {\scriptsize ± 0.02}}
& \textbf{12.60 {\scriptsize ± 0.80}}   & \textbf{200.60 {\scriptsize ± 35.32}}   & \textbf{0.8205 {\scriptsize ± 0.01}}
& \textbf{2.20 {\scriptsize ± 0.98}}    & \textbf{20.60 {\scriptsize ± 5.61}}     & 0.9858 {\scriptsize ± 0.02}
& \textbf{0.80 {\scriptsize ± 0.75}}    & \textbf{0.80 {\scriptsize ± 0.40}}      & \textbf{0.9000 {\scriptsize ± 0.05}} \\
\bottomrule
\end{tabular}
}
\label{table:shd_and_sid}
\end{table}

\paragraph{Implementation}
We employ the GPT-4O API \footnote{We used gpt-4o-2024-08-06 version.}~\citep{openai2024gpt4o} for all LLM-based experiments. In all experiments presented in this section, we utilize \textbf{ENCO}~\citep{lippe2022enco} as the backbone causal discovery algorithm, with detailed settings provided in the Appendix. The observational dataset consists of $|\mathcal{D}{obs}| = 5000$ samples, and we conduct $T = 33$ rounds of intervention sampling, with each round acquiring an interventional batch of $|\mathcal{D}_{int}^I| = 32$ samples, leading to a total of $N = 1056$ interventional samples. For both \textbf{GIT} and \textbf{AIT}, we use $|\mathcal G| = 50$ graphs, each with $|\mathcal{D}_{G, i}| = 128$ data samples for the Monte Carlo approximation of the score.

Considering the size of the real-world graph, we set $T_{warmup} = 3$ and $T_{bootstrapped} = 2$ in \ours, with the exception of the Asia dataset, which has a smaller size. For the Asia dataset, we set $T_{warmup} = 3$ and $T_{bootstrapped} = 1$.

\paragraph{Metrics} 

We evaluate the performance of the online causal discovery algorithms using three metrics: the Structural Hamming Distance (SHD)\citep{shd}, the Structural Intervention Distance (SID)\citep{peters2015structural}, and BSF~\citep{constantinou2019evaluating}. SHD (lower is better) quantifies the number of edge insertions, deletions, or reversals needed to transform one graph into another. SID (lower is better) assesses causal inference by evaluating the correctness of the intervention distribution. BSF (higher is better) mitigates bias by balancing the evaluation of edges and independencies within Bayesian Network structures. A detailed description of these metrics can be found in the Appendix.

\subsection{Empirical Results}

\begin{figure}[t]
     \centering
     \includegraphics[width=1\textwidth]{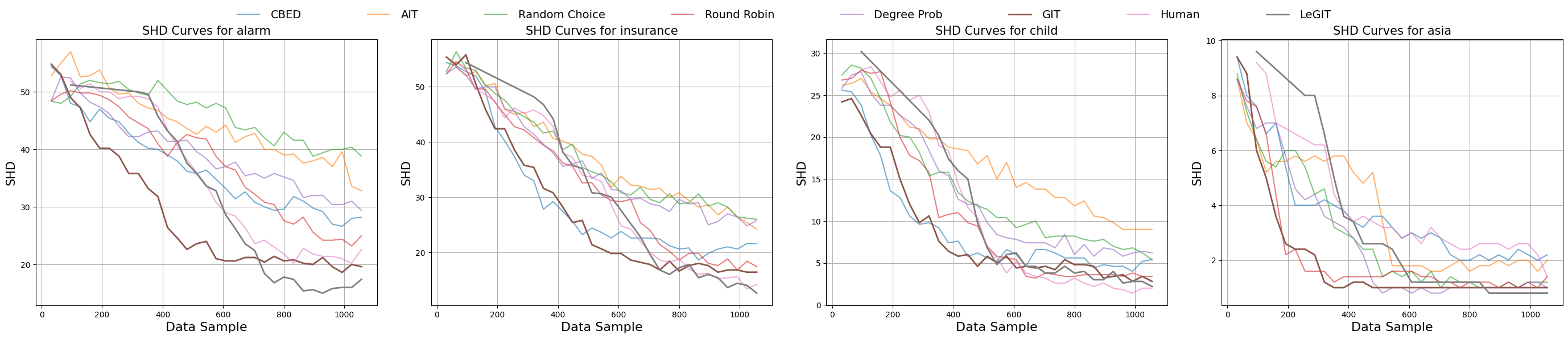}
    \caption{SHD metric for different methods (over 5 seeds) towards different intervention samples. ($T = 33$ rounds, $|D_{int}^I| = 32, N = 1056$)} 
    \label{fig:datasamples}
\end{figure}

The results of the experiments are presented in Table~\ref{table:shd_and_sid}. 
Our method consistently outperforms the baseline approaches across four distinct domains, as evidenced by SHD calculated from five seeds. 
Figure~\ref{fig:datasamples} illustrates the mean SHD of these methods in relation to the number of intervention samples.

As shown in Table~\ref{table:shd_and_sid}, it can be found that \ours achieves state-of-the-art causal discovery performances, with consistent improvements against the adopted gradient-based methods and human baseline.
The superior SHD scores demonstrate that \ours is highly effective in accurately reconstructing the underlying graph structures, minimizing the number of erroneous edge modifications required. 
In Fig.~\ref{fig:datasamples}, we further plot the performances of different methods along with the increase of the data samples obtained from the intervention. 
It can be found that, although at the beginning of the online causal discovery, \ours may not demonstrate outstanding SHD results. Along with more data samples combining, \ours converge to a better solution faster than any other methods. In contrast, despite a faster decrease speed of GIT, GIT finally converges to a suboptimal solution due to unsuitable initialization, which verifies our discussion.
real-world
Besides, SID results highlight \ours's robustness in preserving causal relationships and ensuring accurate causal inferences, which is essential in real-world applications. For BSF metrics, higher values are indicative the learned graph is more accurate and closely matches the true graph in terms of structure and dependencies. 

The results clearly indicate that \ours outperforms existing baseline methods across all three evaluation metrics. The consistently low SHD and SID scores, coupled with high BSF values, underscore the efficacy of \ours in accurately learning network structures and providing tangible benefits. Compared to heuristic-based methods like Random Choice and Round Robin, \ours offers a more strategic and data-driven approach, leading to better performance metrics.

Moreover, while Human interventions remain strong competitors, \ours bridges the gap between automated methods and expert-driven processes. This positions \ours as an effective tool for structure learning, capable of delivering expert-level performance without the need for manual interventions.

For more details on the node variable descriptions and the out-degree distribution, please refer to the Appendix.

\subsection{Low Data Experiment Analysis}

\begin{table*}[ht!]
\centering
\caption{Average SHD, SID, and BSF with standard deviation (over 5 seeds) for real-world data with a low data budget ($T = 33$ rounds, $|D_{int}^I| = 16,\ N = 528$).}
\resizebox{\linewidth}{!}{
\begin{tabular}{l ccc ccc ccc ccc}
\toprule
\multirow{2}{*}{Methods}
& \multicolumn{3}{c}{\textbf{Alarm}} 
& \multicolumn{3}{c}{\textbf{Insurance}} 
& \multicolumn{3}{c}{\textbf{Child}} 
& \multicolumn{3}{c}{\textbf{Asia}} \\
\cmidrule(lr){2-4}\cmidrule(lr){5-7}\cmidrule(lr){8-10}\cmidrule(lr){11-13}
& \textbf{SHD$\downarrow$} & \textbf{SID$\downarrow$} & \textbf{BSF$\uparrow$}  
& \textbf{SHD$\downarrow$} & \textbf{SID$\downarrow$} & \textbf{BSF$\uparrow$}  
& \textbf{SHD$\downarrow$} & \textbf{SID$\downarrow$} & \textbf{BSF$\uparrow$}  
& \textbf{SHD$\downarrow$} & \textbf{SID$\downarrow$} & \textbf{BSF$\uparrow$} \\
\midrule

CBED
& 32.40 {\scriptsize ± 4.36} & 214.20 {\scriptsize ± 69.73} & 0.8229 {\scriptsize ± 0.05}
& 26.40 {\scriptsize ± 3.56} & 327.00 {\scriptsize ± 38.46} & 0.7042 {\scriptsize ± 0.02}
& 9.20  {\scriptsize ± 3.25} & 46.60  {\scriptsize ± 18.49} & 0.8735 {\scriptsize ± 0.06}
& 2.60  {\scriptsize ± 2.73} & 4.40   {\scriptsize ± 2.94}  & 0.7500 {\scriptsize ± 0.18} \\[2pt]

AIT
& 41.20 {\scriptsize ± 5.49} & 270.00 {\scriptsize ± 29.61} & 0.6728 {\scriptsize ± 0.07}
& 37.00 {\scriptsize ± 12.26} & 421.40 {\scriptsize ± 82.68} & 0.5602 {\scriptsize ± 0.12}
& 10.00 {\scriptsize ± 3.29} & 73.40  {\scriptsize ± 45.64} & 0.8175 {\scriptsize ± 0.11}
& 1.40  {\scriptsize ± 0.49} & 3.60   {\scriptsize ± 2.33}  & 0.8417 {\scriptsize ± 0.05} \\[2pt]

Random Choice
& 40.80 {\scriptsize ± 2.71} & 236.40 {\scriptsize ± 12.31} & 0.7203 {\scriptsize ± 0.07}
& 25.60 {\scriptsize ± 2.24} & 311.00 {\scriptsize ± 22.17} & 0.7211 {\scriptsize ± 0.03}
& 8.20  {\scriptsize ± 2.32} & 51.60  {\scriptsize ± 33.15} & 0.8300 {\scriptsize ± 0.07}
& \textbf{1.20  {\scriptsize ± 0.40}} & \textbf{2.40   {\scriptsize ± 2.80}}  & \textbf{0.8458 {\scriptsize ± 0.06}} \\[2pt]

Round Robin
& 33.60 {\scriptsize ± 7.34} & \underline{169.00 {\scriptsize ± 35.69}} & 0.8975 {\scriptsize ± 0.06}
& 22.60 {\scriptsize ± 3.72} & \underline{269.20 {\scriptsize ± 44.37}} & \underline{0.7749 {\scriptsize ± 0.04}}
& \underline{4.60  {\scriptsize ± 2.42}} & \underline{32.40  {\scriptsize ± 24.25}} & \underline{0.9327 {\scriptsize ± 0.06}}
& \textbf{1.20  {\scriptsize ± 0.40}} & \textbf{2.40   {\scriptsize ± 2.80}}  & \textbf{0.8458 {\scriptsize ± 0.06}} \\[2pt]

Degree Prob
& 42.60 {\scriptsize ± 6.34} & 244.20 {\scriptsize ± 35.06} & 0.6762 {\scriptsize ± 0.08}
& 31.80 {\scriptsize ± 4.40} & 351.00 {\scriptsize ± 27.64} & 0.6737 {\scriptsize ± 0.05}
& 9.00  {\scriptsize ± 2.90} & 60.80  {\scriptsize ± 23.01} & 0.8512 {\scriptsize ± 0.07}
& 1.40  {\scriptsize ± 0.49} & 3.60   {\scriptsize ± 3.20}  & 0.8667 {\scriptsize ± 0.01} \\[2pt]

GIT
& \underline{27.20 {\scriptsize ± 4.71}} & \underline{177.80 {\scriptsize ± 61.65}} & \underline{0.9025 {\scriptsize ± 0.04}}
& \underline{22.40 {\scriptsize ± 3.72}} & \underline{296.00 {\scriptsize ± 44.23}} & \underline{0.7463 {\scriptsize ± 0.04}}
& 6.00  {\scriptsize ± 1.55} & \underline{33.80  {\scriptsize ± 15.75}} & \underline{0.9134 {\scriptsize ± 0.05}}
& 1.80  {\scriptsize ± 0.75} & 6.00   {\scriptsize ± 4.56}  & 0.8333 {\scriptsize ± 0.06} \\[2pt]
\midrule
\rowcolor{SkyBlue!20}
\textbf{\ours} 
& \textbf{21.00 {\scriptsize ± 2.37}} & \textbf{159.40 {\scriptsize ± 26.81}} & \textbf{0.9158 {\scriptsize ± 0.01}}
& \textbf{18.20 {\scriptsize ± 1.17}} & \textbf{259.00 {\scriptsize ± 66.69}} & \textbf{0.7894 {\scriptsize ± 0.02}}
& \textbf{4.40  {\scriptsize ± 2.15}} & \textbf{28.20  {\scriptsize ± 15.03} }& \textbf{0.9499 {\scriptsize ± 0.04}}
& 1.40  {\scriptsize ± 0.49} & 3.80   {\scriptsize ± 3.43}  & 0.8417 {\scriptsize ± 0.06} \\[2pt]
\bottomrule
\end{tabular}
}
\label{table:lowdata_full}
\vspace{-3mm}
\end{table*}

Furthermore, we conduct additional experiments in an extremely low-data setting, where only $16$ interventional data samples are sampled from each round, other settings are the same as above.
This low-data setting is more practically relevant. Additionally, due to the insufficient intervention data, the performance of causal discovery algorithms in estimating effects is diminished~\citep{lippe2022enco}, which further tests the effectiveness of the intervention strategy.

The results presented in Table~\ref{table:lowdata_full}, show that \ours achieves larger improvements under these conditions in 3 complex datasets. These findings highlight the effectiveness of \ours in real-world experimental design scenarios, where both the number of interventions and the sample size are limited.

The result of the low-data experiment further verifies our discussion that numerical methods suffer from noise or insufficient data, leading to a suboptimal solution. The numerical-based method does not even outperform round-robin on 3 smaller datasets, underscoring its limitations in such scenarios.
In contrast, the use of LLMs enables scalable and effective guidance that complements numerical methods, reducing the risk of suboptimal convergence, and having more stable performance in real-world applications.

\begin{figure}[h]
     \centering
     \includegraphics[width=\textwidth]{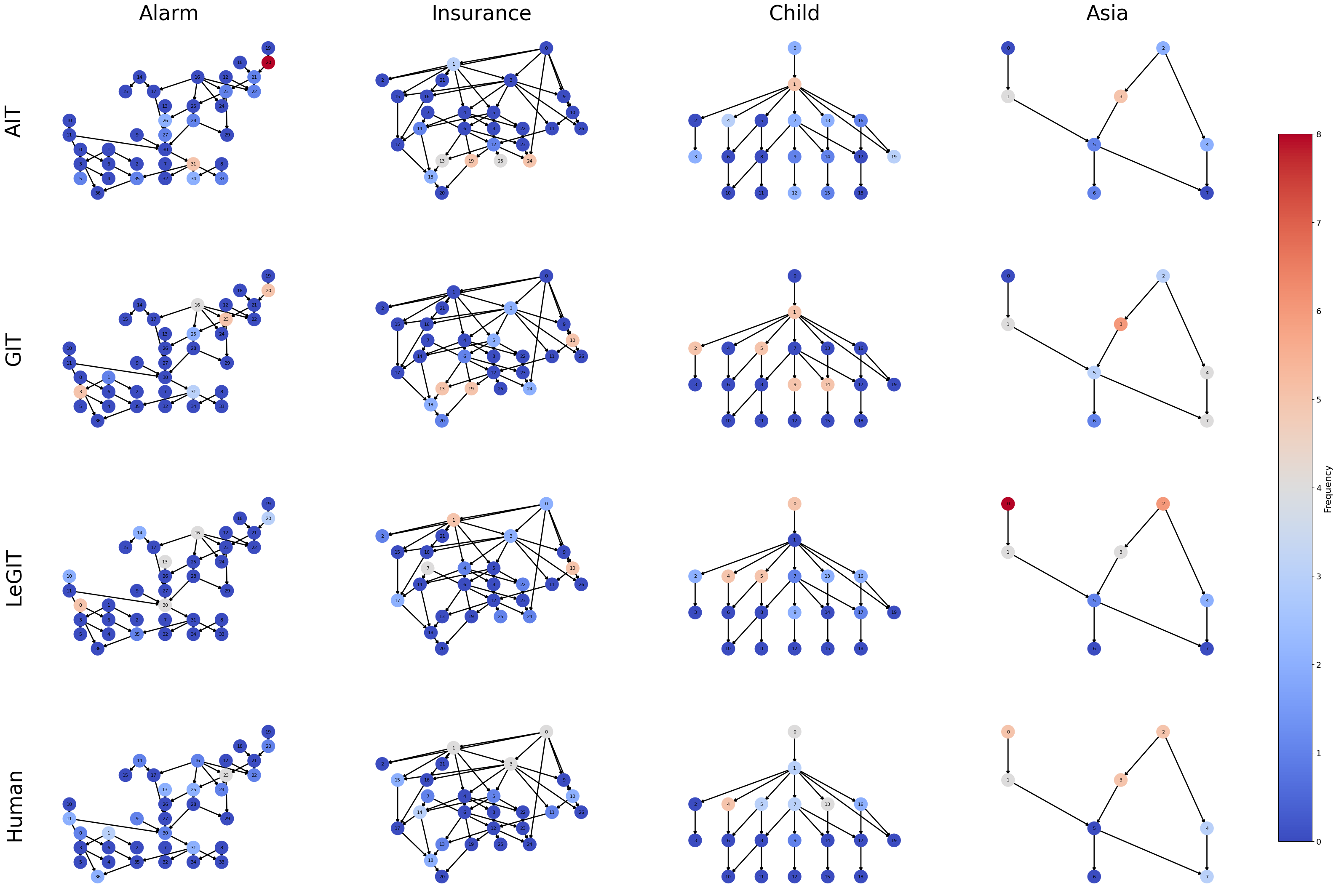}
    \caption{The selected Node Frequency obtained by different strategies on Epoch 0-4 from 5 different seeds under Table\ref{table:shd_and_sid} setting.} 
    \label{fig:epoch5}
    \vspace{-0.1in}
\end{figure}

\subsection{Detailed comparisons and analyses}

Figs.~\ref{fig:epoch5} depict the distribution of selected nodes between epochs 0-5. Notably, the numerical methods (GIT, AIT) tend to get trapped in the initialization phase of the Insurance, alarm dataset, consistently selecting less central nodes in the graph, often peripheral or leaf nodes. In contrast, our model (\ours) that SocioEcon (socioeconomic status, No.1 nodes in graph), plays a crucial role in the insurance system, potentially influencing car choice, driving behavior, and the ability to afford certain safety features. 

Compared to the Human baseline, \ours demonstrates superior performance on two complex datasets: Alarm and Insurance. As the number of variables increases, determining the optimal interventions to reveal the structure of the causal graph becomes combinatorially explosive. For humans, this process can be extremely tedious or error-prone, as they may subjectively favor certain nodes, failing to synthesize different viewpoints due to simpler mental models. In contrast, refer to Figs.\ref{fig:motivation}, LLMs follow the instructions provided in Fig.3 step by step and align them with their background knowledge. With the self-consistency prompt technique, LLMs generate more robust results, providing a highly cost-effective alternative to hiring multiple human experts for advice. 

LLMs' primary value lies in scalability and availability, providing immediate, cost-effective guidance in real-time, especially for online causal discovery where rapid interventions are required. They excel in large-scale systems with many variables, where it’s infeasible for experts to assess all nodes. LLMs complement human oversight by filling gaps in availability, consistency, and knowledge while helping avoid expert biases. Additionally, LLMs quickly process metadata, saving experts time and providing a solid starting point, as seen in other AI-assisted tasks.

\vspace{-2mm}
\section{Conclusions}
In this work, we investigated how to incorporate LLMs into the intervention targeting in experimental design for causal discovery.
We introduced a novel framework called \ours, which combines the best of previous numerical-based approaches and the rich knowledge in LLMs. 
Specifically, \ours leverages LLMs to warm up the online causal discovery procedure by identifying the influential root cause variables to begin the intervention. 
After setting up a relatively clear picture of the underlying causal graph, \ours then integrates the numerical-based methods to continue to select the intervention targets. Empirically, we verified the effectiveness of \ours leveraging LLMs to warm up the online causal discovery can achieve the state-of-the-art performance across multiple realistic causal discovery benchmarks. 
Furthermore, we compared its performance against a human baseline, highlighting its unique value.
LLMs offer a scalable and cost-effective approach to enhance experimental design, paving the way for new research directions of causal analysis and scientific discovery fields.

\bibliographystyle{references}
\bibliography{references/causality,references/llm,references/causalllm,references/onlineCD,references/exp_design}
\appendix
\clearpage

\appendix


\section{More Details of Datasets}
\label{appdx:datsets}
In this part, we will further introduce the 4 different domain Causal graph discovery dataset from BNleaner Repository~\citep{JSSv035i03}. For the description of each variable, we refer to \citep{llm_build_graph} and make some changes on it. We show the ground truth and the out-degree nodes distributions as follows.

\textbf{Asia} show as Fig.\ref{fig:asia} aims to model a hypothetical medical scenario in which a person visits a clinic with shortness of breath. The network helps in diagnosing the likely causes (e.g., tuberculosis, lung cancer, bronchitis) by probabilistically combining the available evidence (e.g., history of travel, smoking status, X-ray results)

\textbf{Child} show as Fig.\ref{fig:child} is used to model the diagnosis of pediatric health issues, particularly those that can occur in newborns or young children. It’s often employed in studies related to decision support systems, where probabilistic graphical models assist in medical diagnosis. The network is significantly larger than the Asia dataset, with 20 nodes (variables) and 25 edges. 

\textbf{Insurance} shown as Fig.~\ref{fig:insurance} intended to simulate a situation in which an insurance company needs to assess various risks and make decisions regarding policies, claims, and customer behavior. It represents the interdependencies between multiple insurance factors. It has 27 nodes and 52 edges

\textbf{Alarm} shown as Fig.~\ref{fig:alarm} is known as the ALARM (A Logical Alarm Reduction Mechanism) network, and it was originally developed to model a patient monitoring system for anesthesia purposes. It helps in predicting physiological conditions of patients, detecting potential complications, and generating alerts when necessary, consists of 37 nodes and 46 edges.


\begin{figure}[!h]
    \centering
    \includegraphics[width=0.9\linewidth]{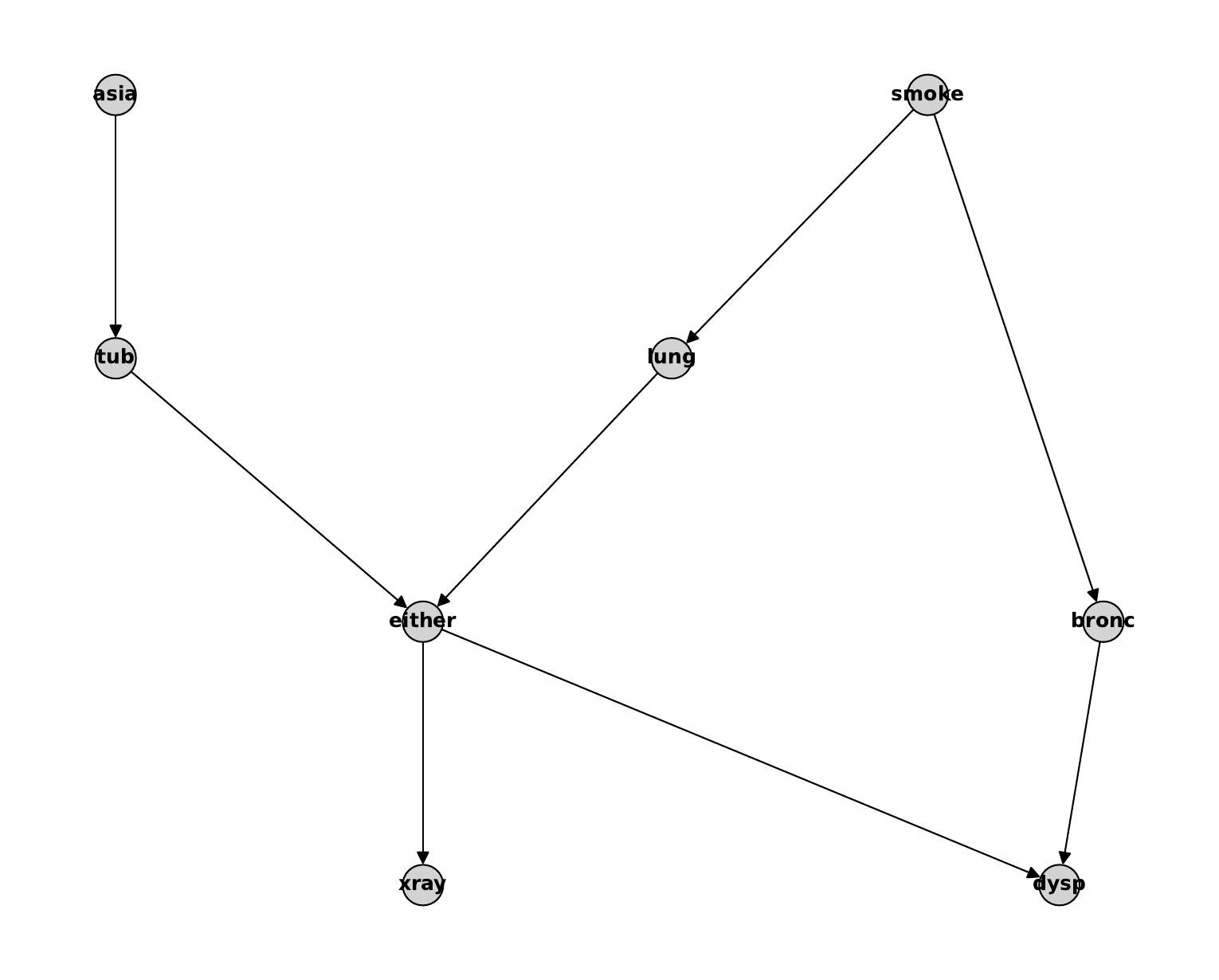}
    \caption{Ground truth Causal Graph for asia data.}
    \label{fig:asia}
\end{figure}

\begin{figure}[!h]
    \centering
    \includegraphics[width=0.8\linewidth]{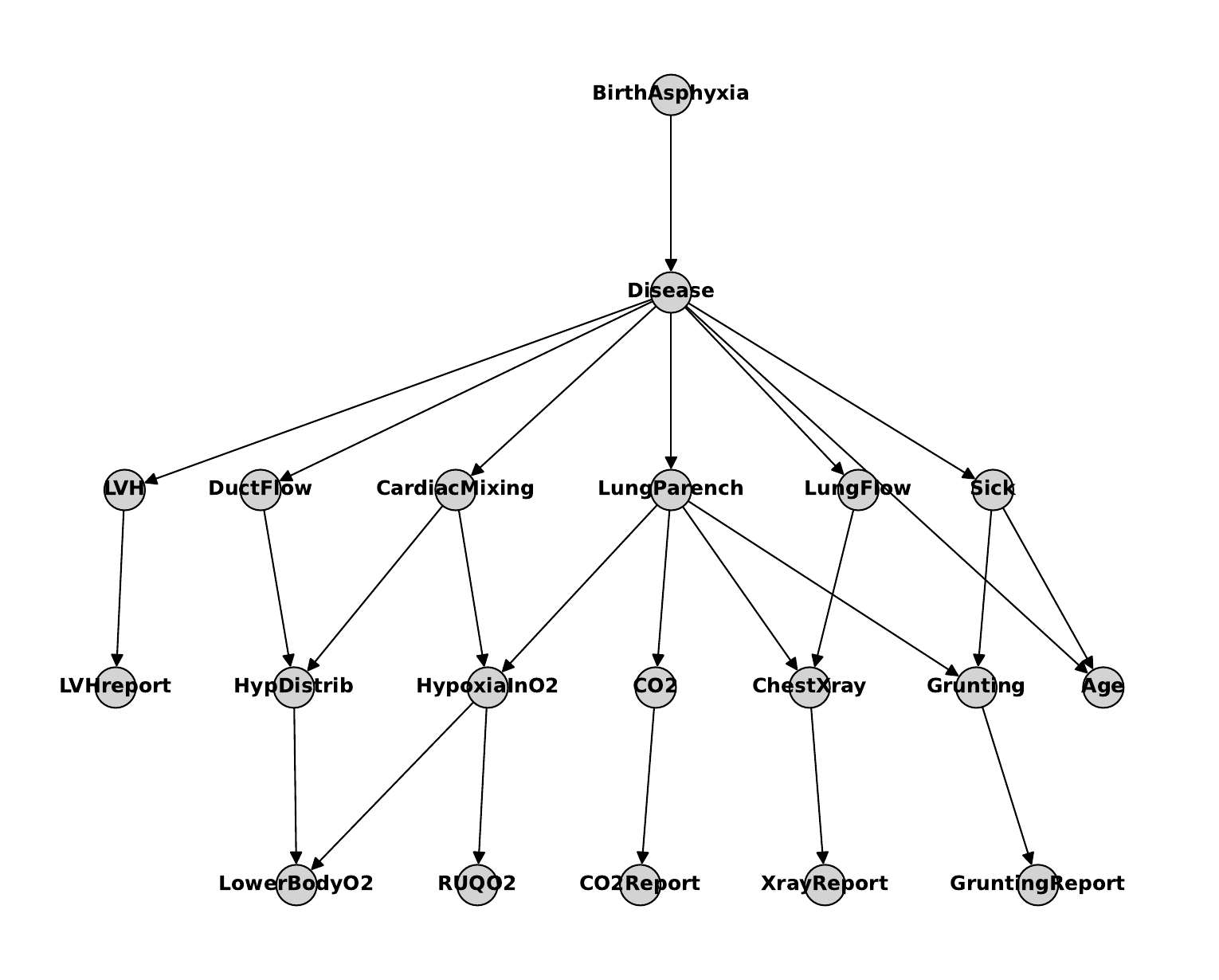}
    \caption{Ground truth Causal Graph for child data.}
    \label{fig:child}
\end{figure}

\begin{figure}[!h]
    \centering
    \includegraphics[width=0.8\linewidth]{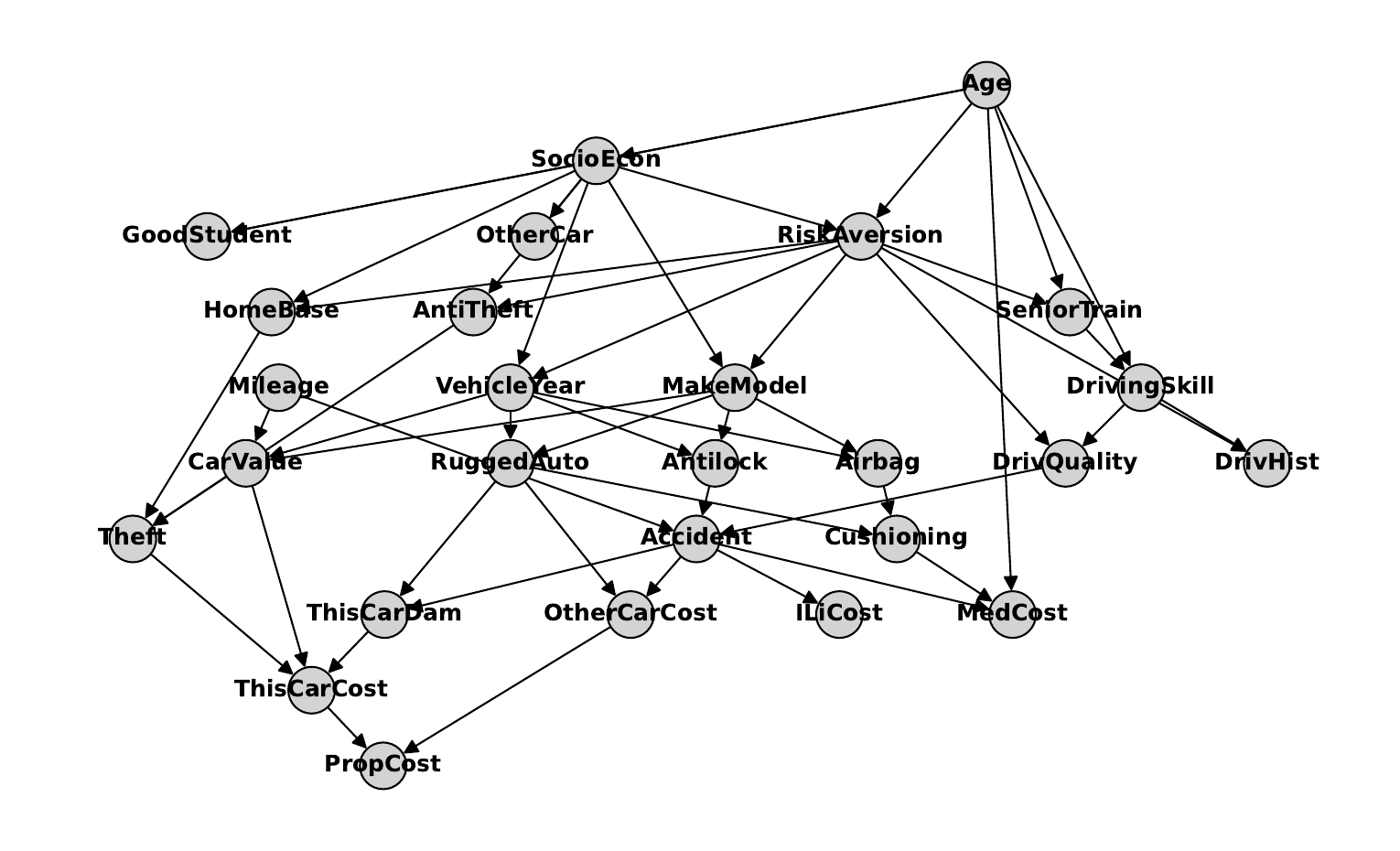}
    \caption{Ground truth Causal Graph for insurance data.}
    \label{fig:insurance}
\end{figure}

\begin{figure}[!h]
    \centering
    \includegraphics[width=0.75\linewidth]{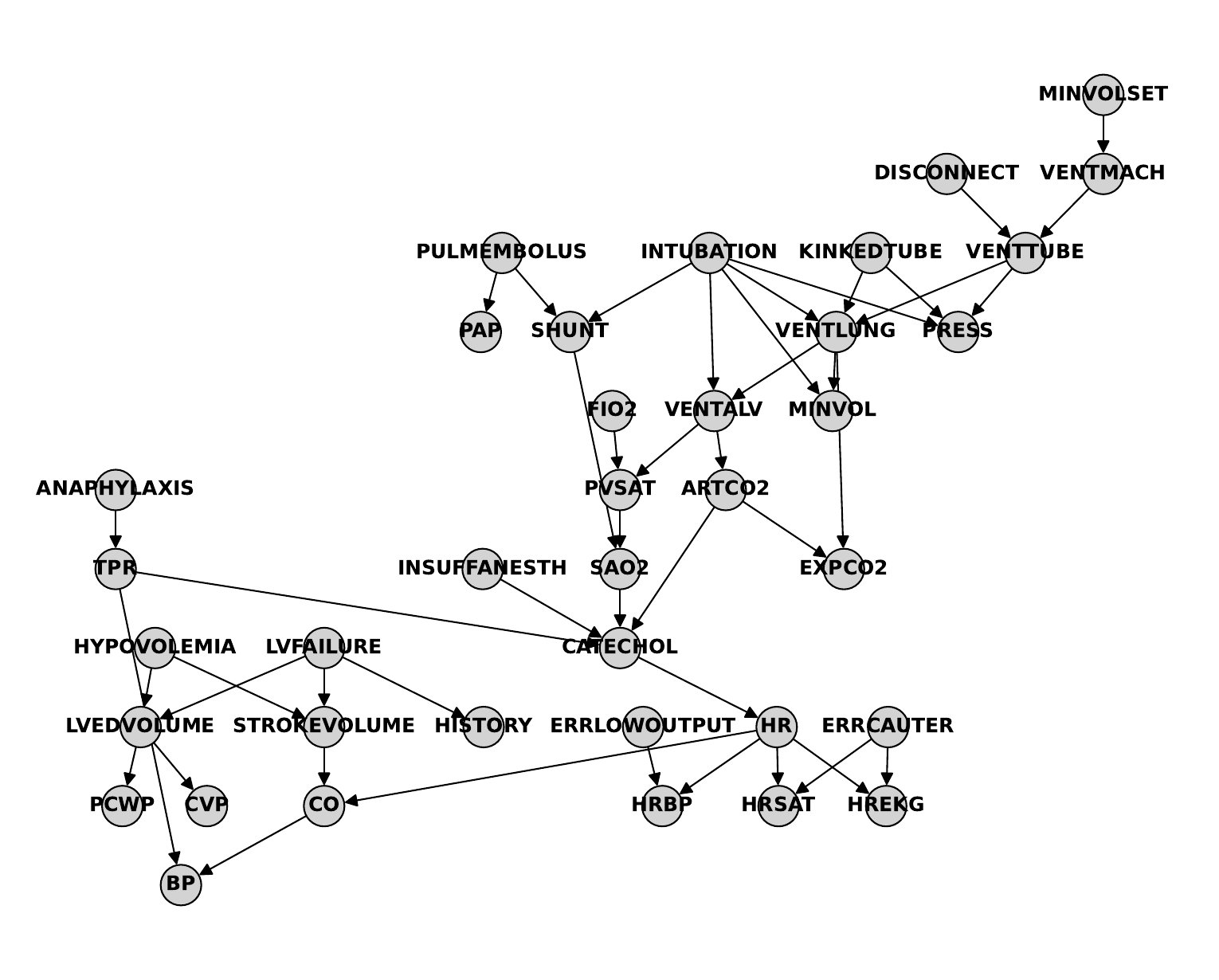}
    \caption{Ground truth Causal Graph for alarm data.}
    \label{fig:alarm}
\end{figure}


\begin{figure}[!h]
    \centering
    \setcounter{subfigure}{0} 
    \subfigure[Out degree distribution of Asia data.]{
        \centering
        \includegraphics[width=0.45\textwidth]{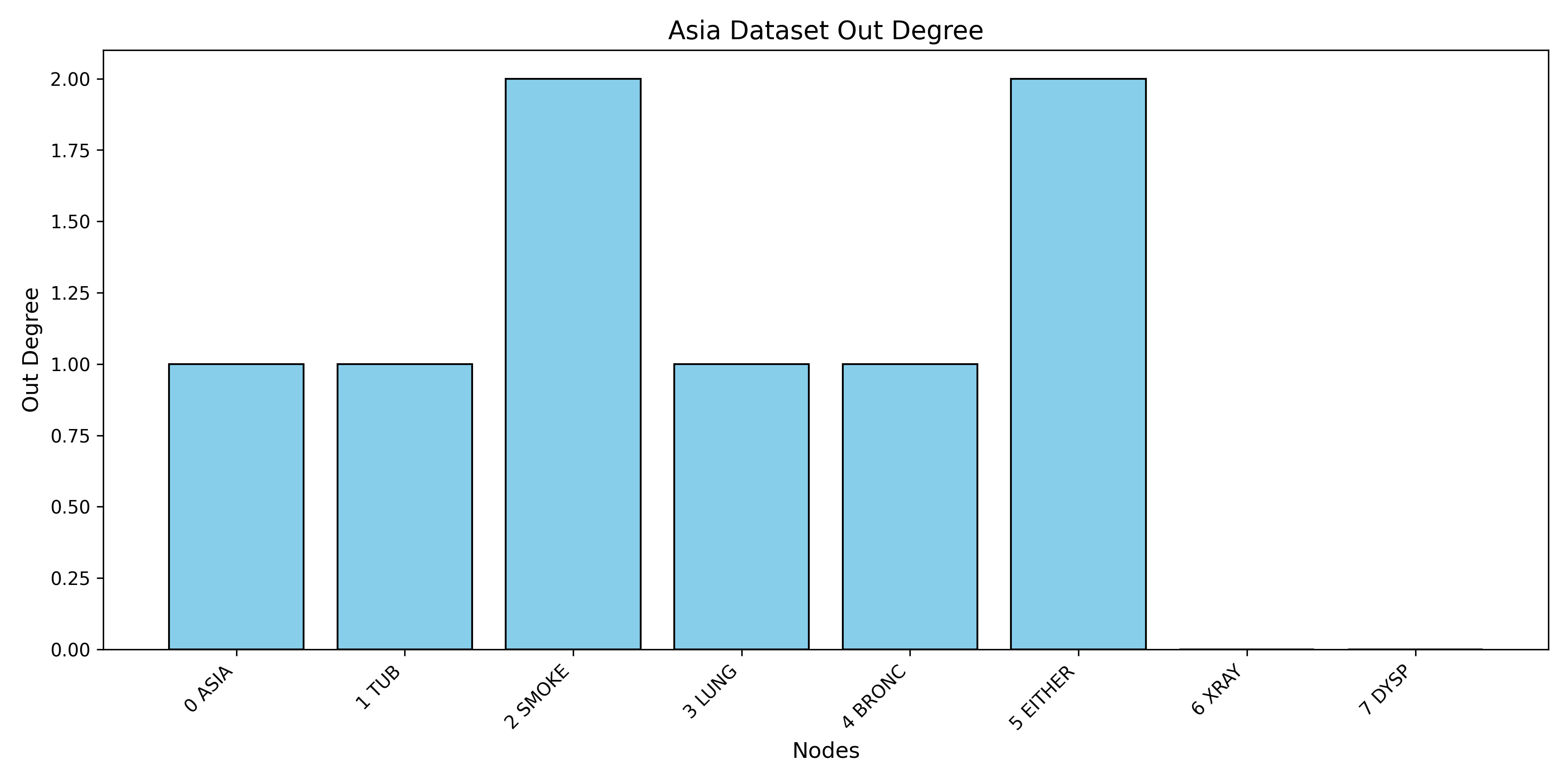}
        \label{fig:asia_out}
    }
    \subfigure[Out degree distribution of Child data.]{
        \centering
        \includegraphics[width=0.45\textwidth]{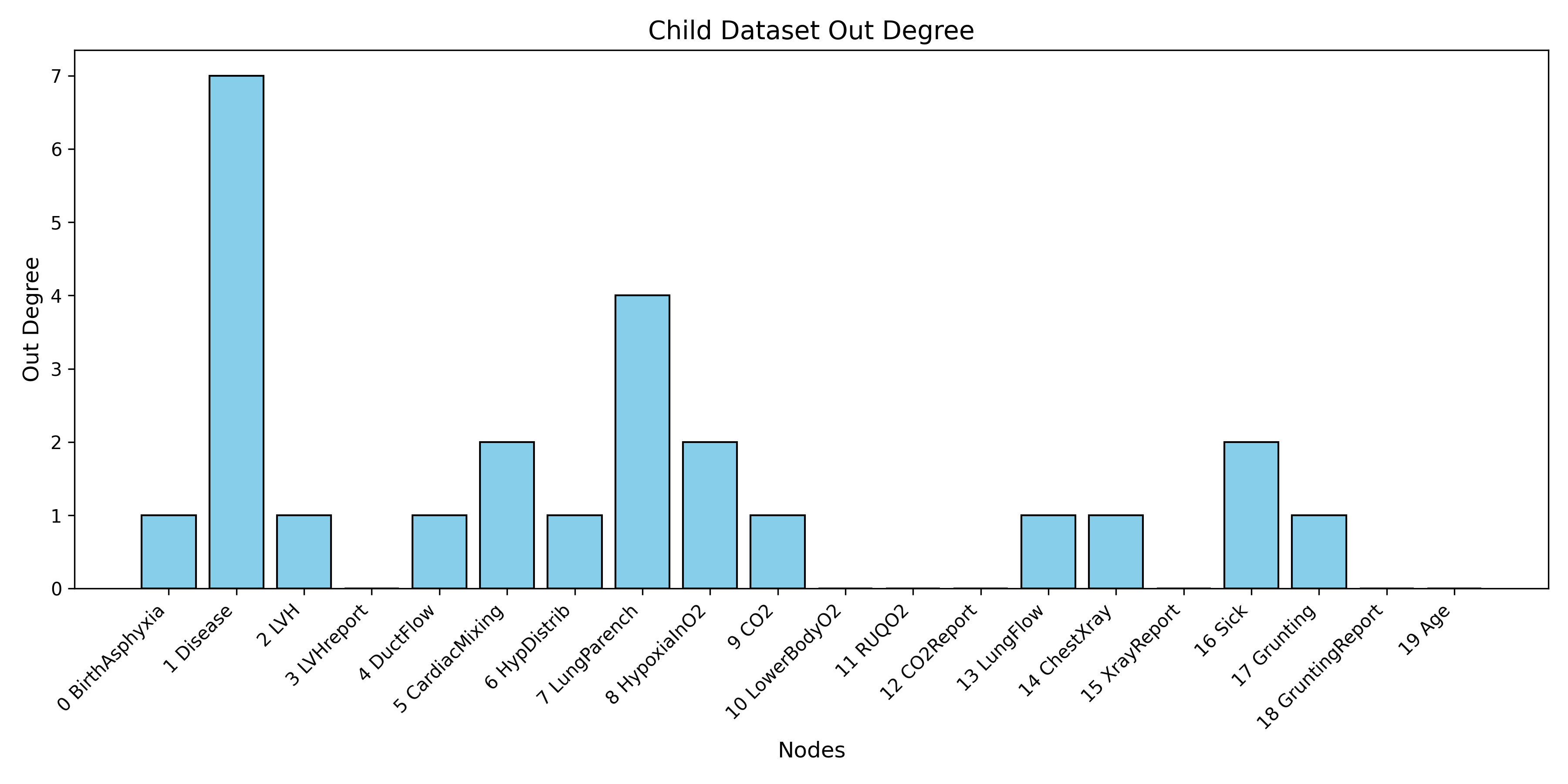}
        \label{fig:child_out}
    }
    \centering
    \subfigure[Out degree distribution of Insurance data.]{
        \centering
        \includegraphics[width=0.45\textwidth]{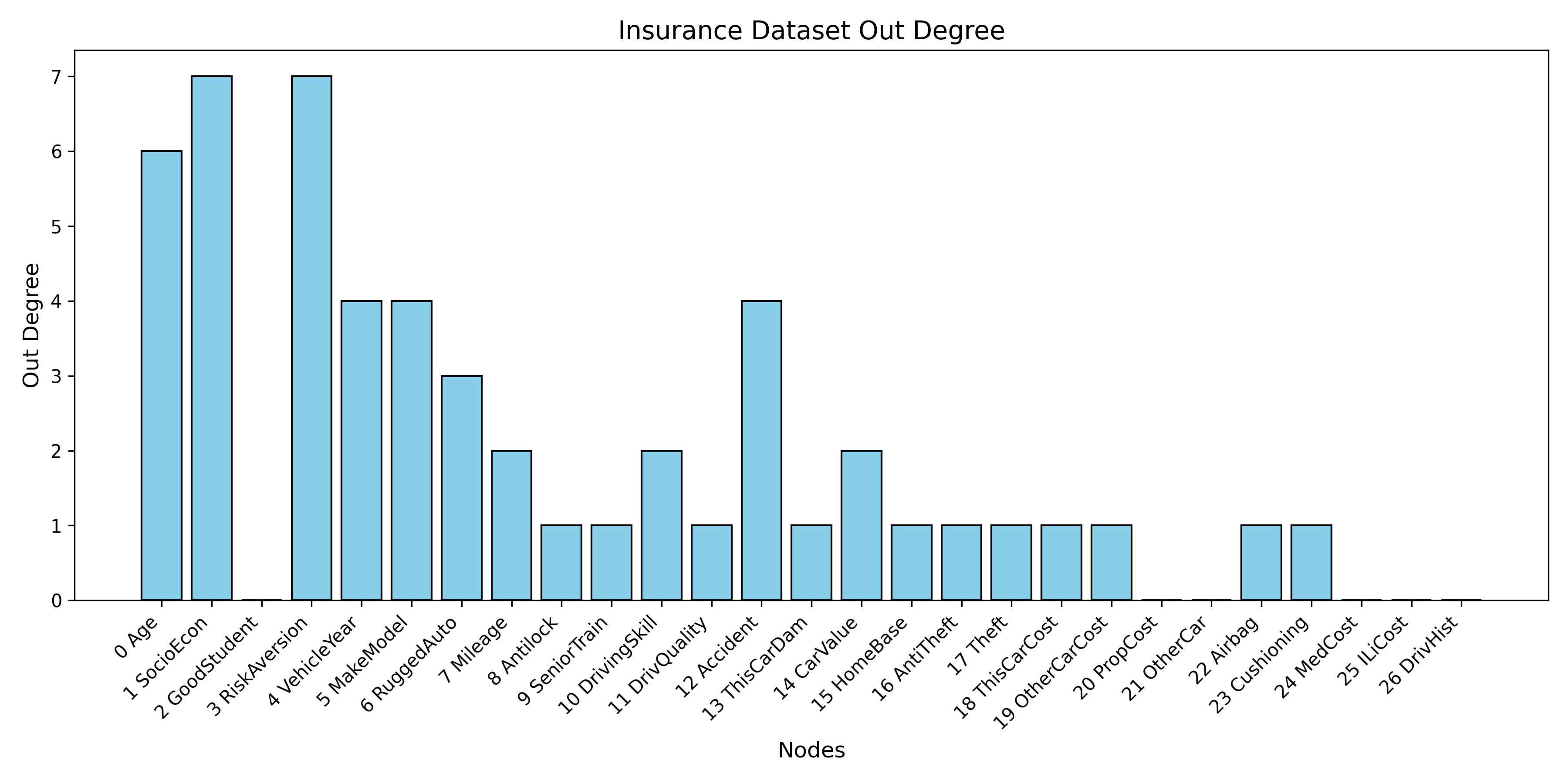}
        \label{fig:insurance_out}
    }
    \subfigure[Out degree distribution of Alarm data.]{
        \centering
        \includegraphics[width=0.45\textwidth]{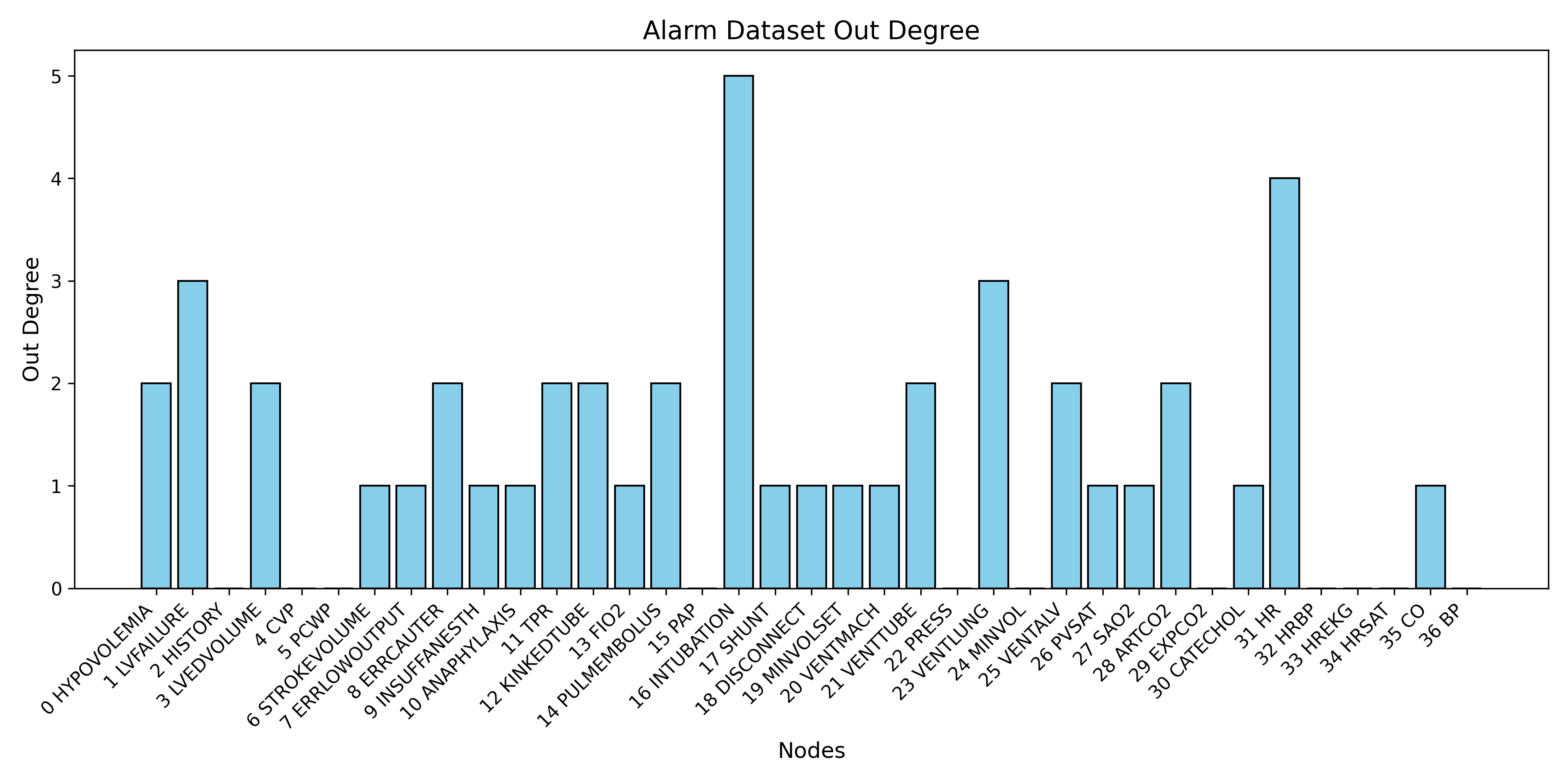}
        \label{fig:alarm_out}
    }
    \caption{Out Degree of 4 Dataset.}
    \label{fig:graph_out}
\end{figure}






\section{More Details of Experiments}
\label{appdx:exp}
\subsection{ENCO Hyperparameters}
For experiments using the ENCO framework, we used the exact parameters reported by ~\citep{lippe2022enco}. These parameters are provided in Table~\ref{table:hparams-enco} to ensure the completeness of our report.

\begin{table}[ht]
    \caption{Hyperparameters used for the ENCO framework.}
    \centering
    \centering\small\sc
\resizebox{\linewidth}{!}{
    \begin{tabular}{ lc }
        \toprule
        parameter                    & value \\
        \midrule
        Sparsity regularizer $\lambda_{sparse}$                &  $\num{4e-3}$   \\
        Distribution model                &  2 layers, hidden size 64, LeakyReLU($\alpha = 0.1$)   \\
        Batch size & 128 \\
        Learning rate - model & $\num{5e-3}$\\
        Weight decay - model & $\num{1e-4}$\\
        Distribution fitting iterations F  & $\num{1000}$\\
        Graph fitting iterations G & $\num100$\\
        Graph samples K & 100 \\
        Epochs & 30 \\
        Learning rate - $\gamma$ & $\num{2e-2}$ \\
        Learning rate - $\theta$ & $\num{1e-1}$\\
        \bottomrule
    \end{tabular}}
    \label{table:hparams-enco}
\end{table}



\subsection{Detailed Metrics}

In this section, we present the details of 3 different metrics mentioned in the experiment part.

\begin{itemize}
    \item The Structural Hamming Distance (SHD)~\citep{shd}: SHD is a widely used metric for comparing graphs based on their adjacency matrices. It measures the number of discrepancies between two binary adjacency matrices, counting the minimum number of edges that need to be removed $D$, added $A$, or reversed $R$ in order to obtain the truth causal graph.
    $$
    SHD = A+D+R
    $$
    \item Structural Intervention Distance (SID)~\citep{peters2015structural}: SID evaluates the similarity between DAGs$(\mathcal{G}, \mathcal{H})$ based on their ability to capture causal effects. Specifically, it quantifies the number of incorrectly inferred intervention distributions, highlighting how false edges in the generated graph can impact the derived causal effects.
\begin{align*}
\text{SID} &= 
\# \{ (i, j), i \neq j \mid \text{the intervention distribution from}\\
& i \text{ to } j \  \text{is falsely estimated by } \mathcal{H} \text{ with respect to } \mathcal{G} \}.
\end{align*}

\item Balanced scoring function(BSF)~\citep{constantinou2019evaluating}: BSF eliminates bias in assessing structure learning algorithms by adjusting penalties and rewards based on the occurrence rates of dependencies and independencies in the ground truth graph.

\[
\text{BSF} = \frac{1}{2} \left( \frac{\text{TP}}{a} + \frac{\text{TN}}{i} - \frac{\text{FP}}{i} - \frac{\text{FN}}{a} \right),
\]
where $TP$ represents the true positives, $TN$ the true negatives, $FP$ the false positives, and $FN$ the false negatives. The variable $a$ denotes the number of arcs in the true graph, while $i = \frac{|N| \times (|N| - 1)}{2} - a$ represents the number of independencies in the true graph, where $|N|$ is the total number of nodes.

\end{itemize}

\subsection{Final Causal Graph}

In this section, we present the final causal graph after $T = 33$, total sample $N = 1056$ results with GIT, Human, and \ours.

\begin{figure}[!htb]
    \centering
    \includegraphics[width=\linewidth]{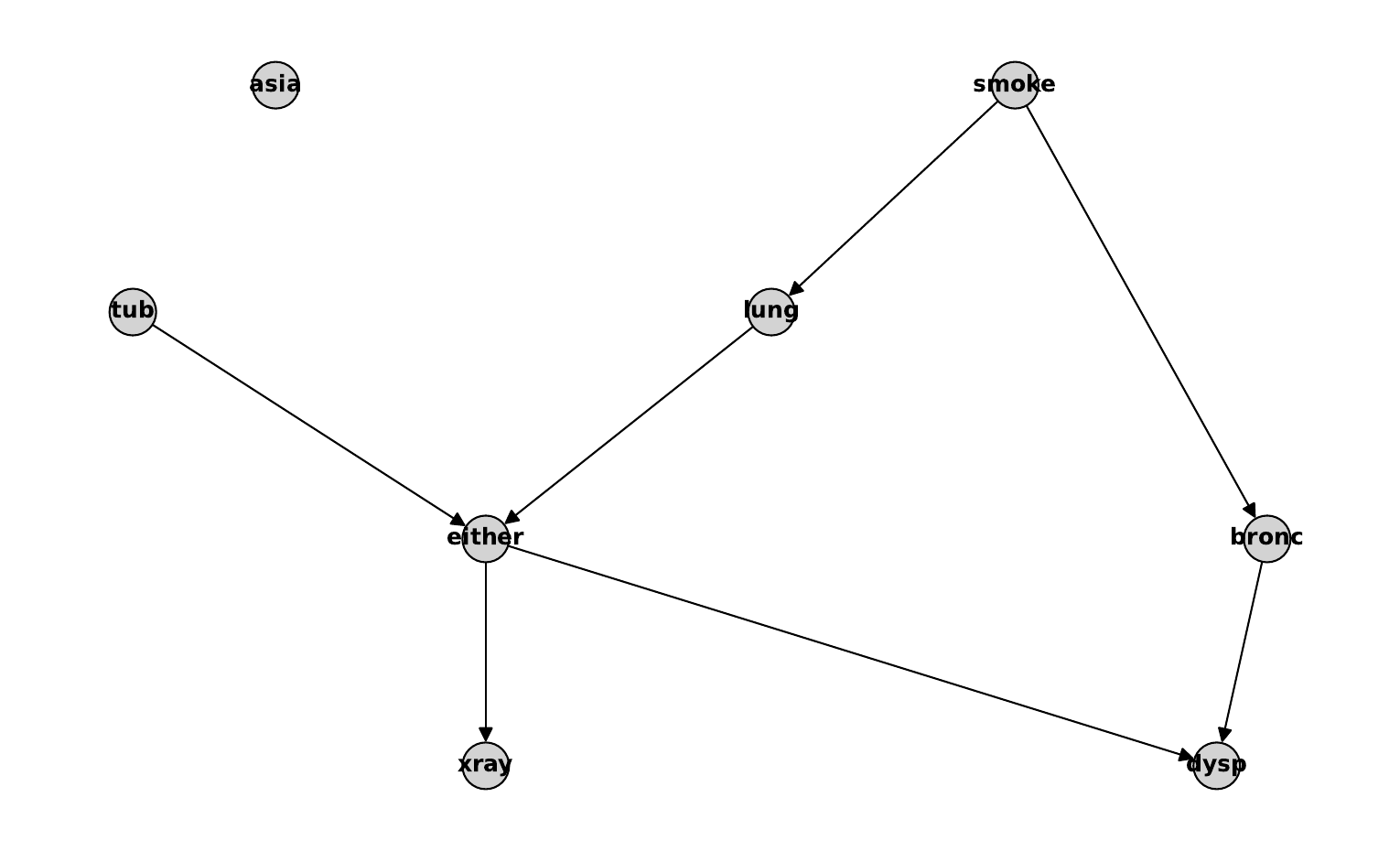}
    \caption{\ours final causal graph for asia dataset}
    \label{fig:asia_ours}
\end{figure}

\begin{figure}[!htb]
    \centering
    \includegraphics[width=\linewidth]{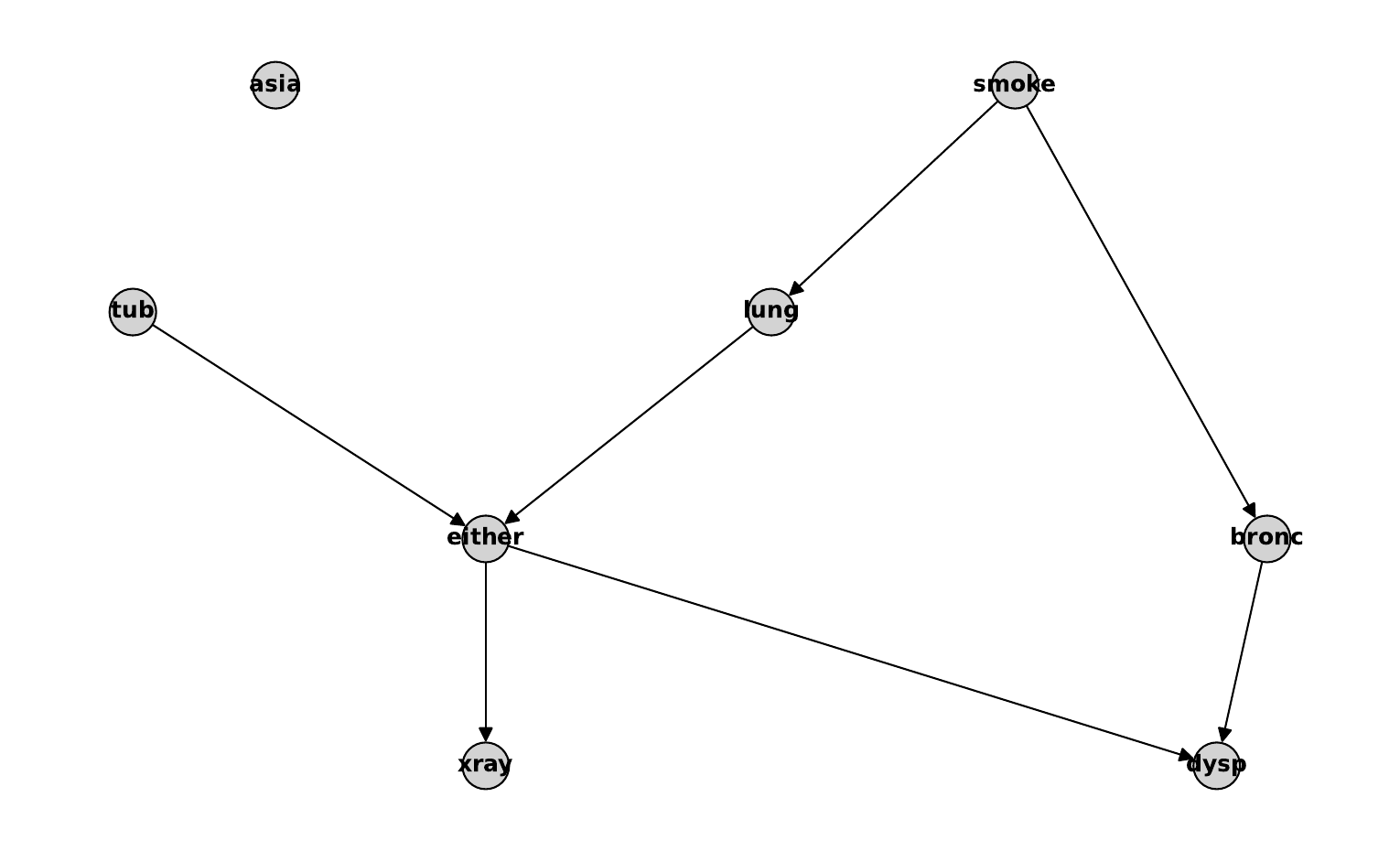}
    \caption{GIT's final causal graph for asia dataset}
    \label{fig:asia_git}
\end{figure}

\begin{figure}[!htb]
    \centering
    \includegraphics[width=\linewidth]{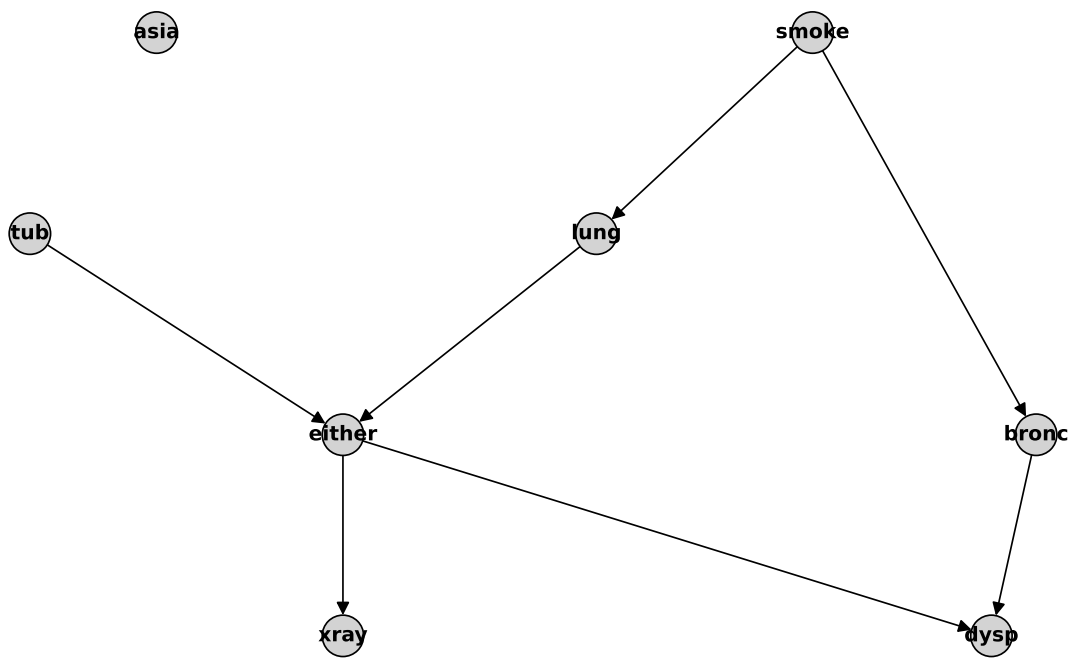}
    \caption{Huamn's final causal graph for asia dataset}
    \label{fig:asia_human}
\end{figure}

\begin{figure}[!htb]
    \centering
    \includegraphics[width=\linewidth]{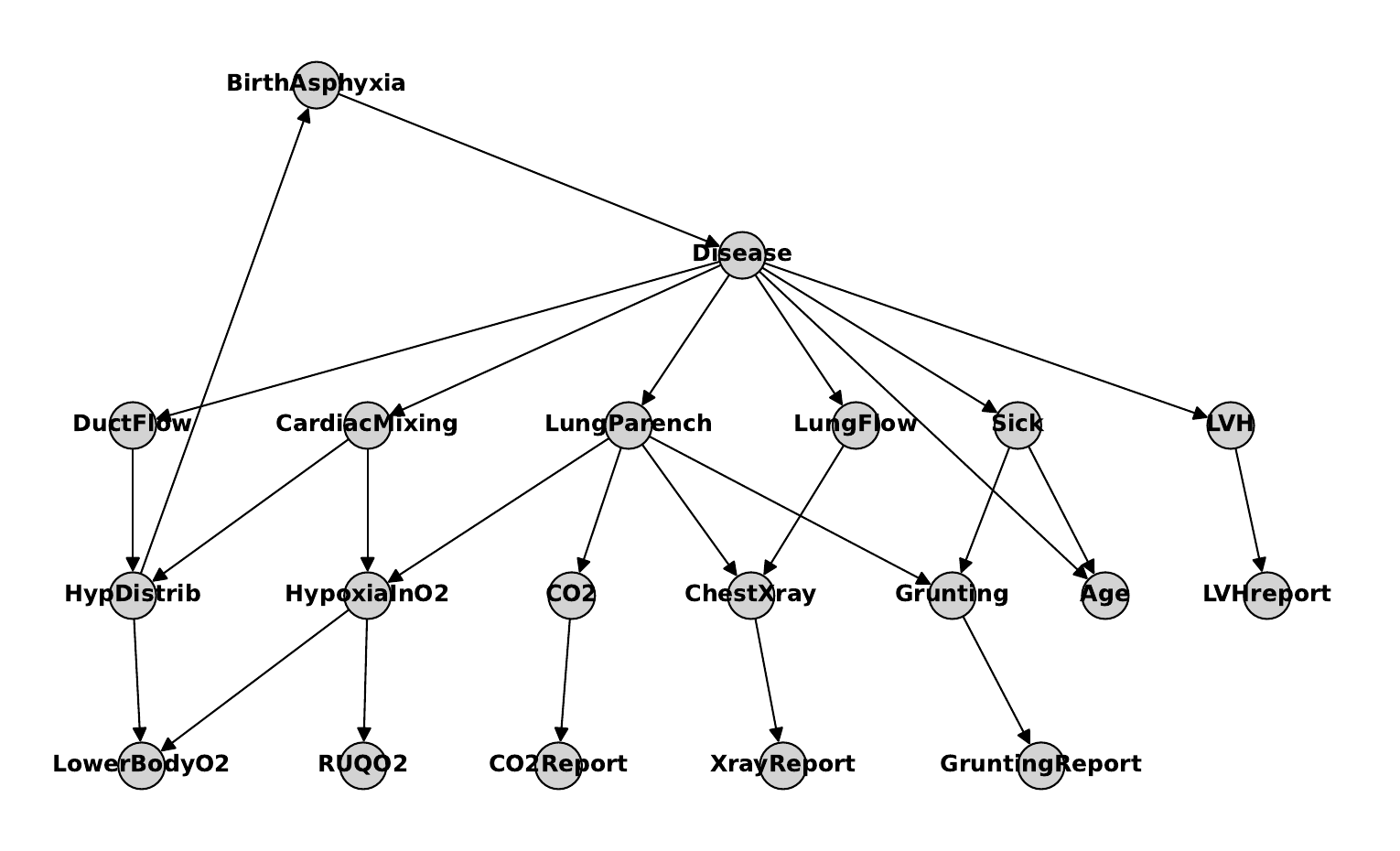}
    \caption{\ours final causal graph for child dataset}
    \label{fig:child_ours}
\end{figure}

\begin{figure}[!htb]
    \centering
    \includegraphics[width=\linewidth]{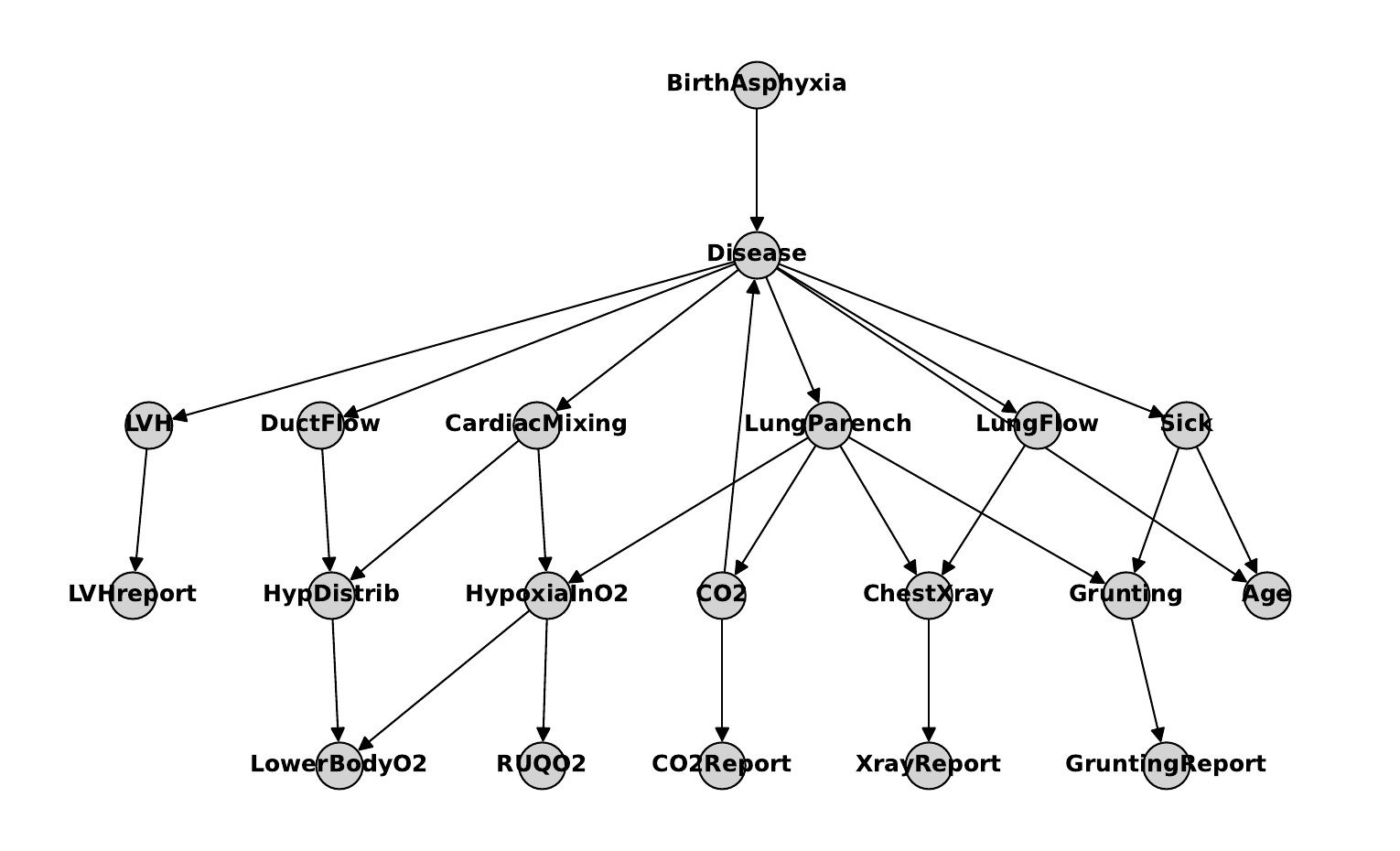}
    \caption{Human's final causal graph for child dataset}
    \label{fig:child_human}
\end{figure}

\begin{figure}[!htb]
    \centering
    \includegraphics[width=\linewidth]{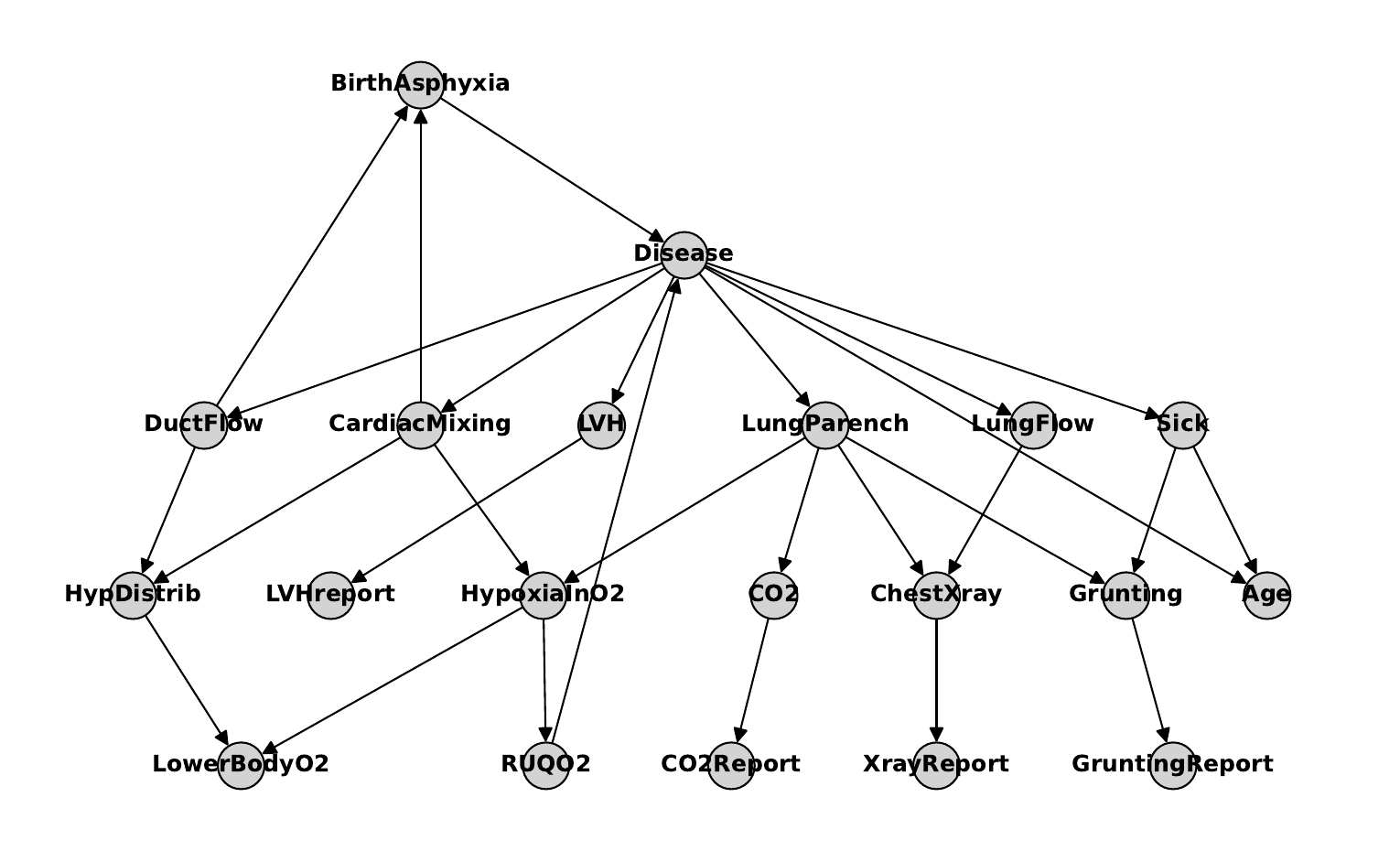}
    \caption{GIT's final causal graph for child dataset}
    \label{fig:child_git}
\end{figure}

\begin{figure}[h]
    \centering
    \includegraphics[width=\linewidth]{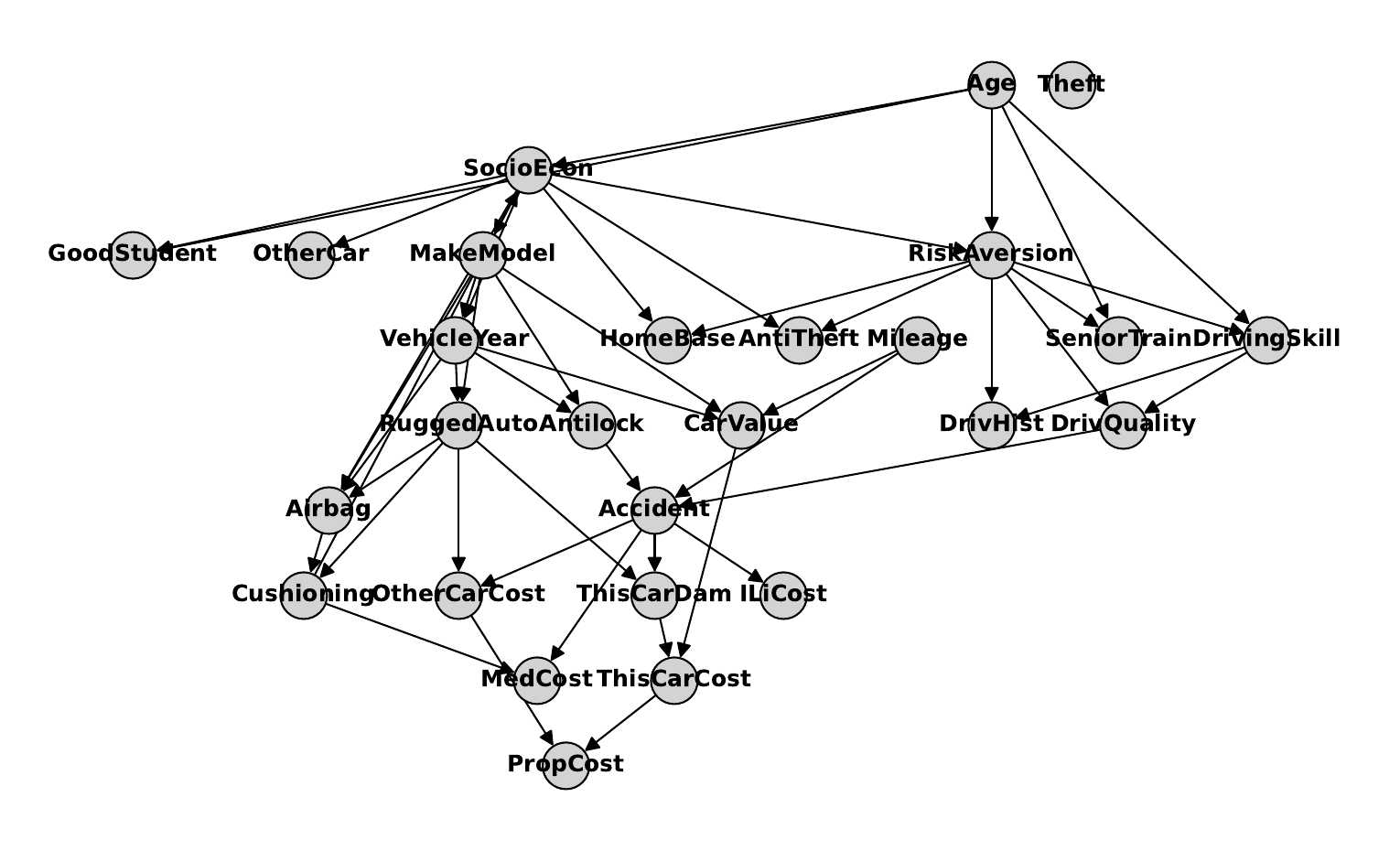}
    \caption{\ours final causal graph for insurance dataset}
    \label{fig:insurance_ours}
\end{figure}

\begin{figure}[!htb]
    \centering
    \includegraphics[width=\linewidth]{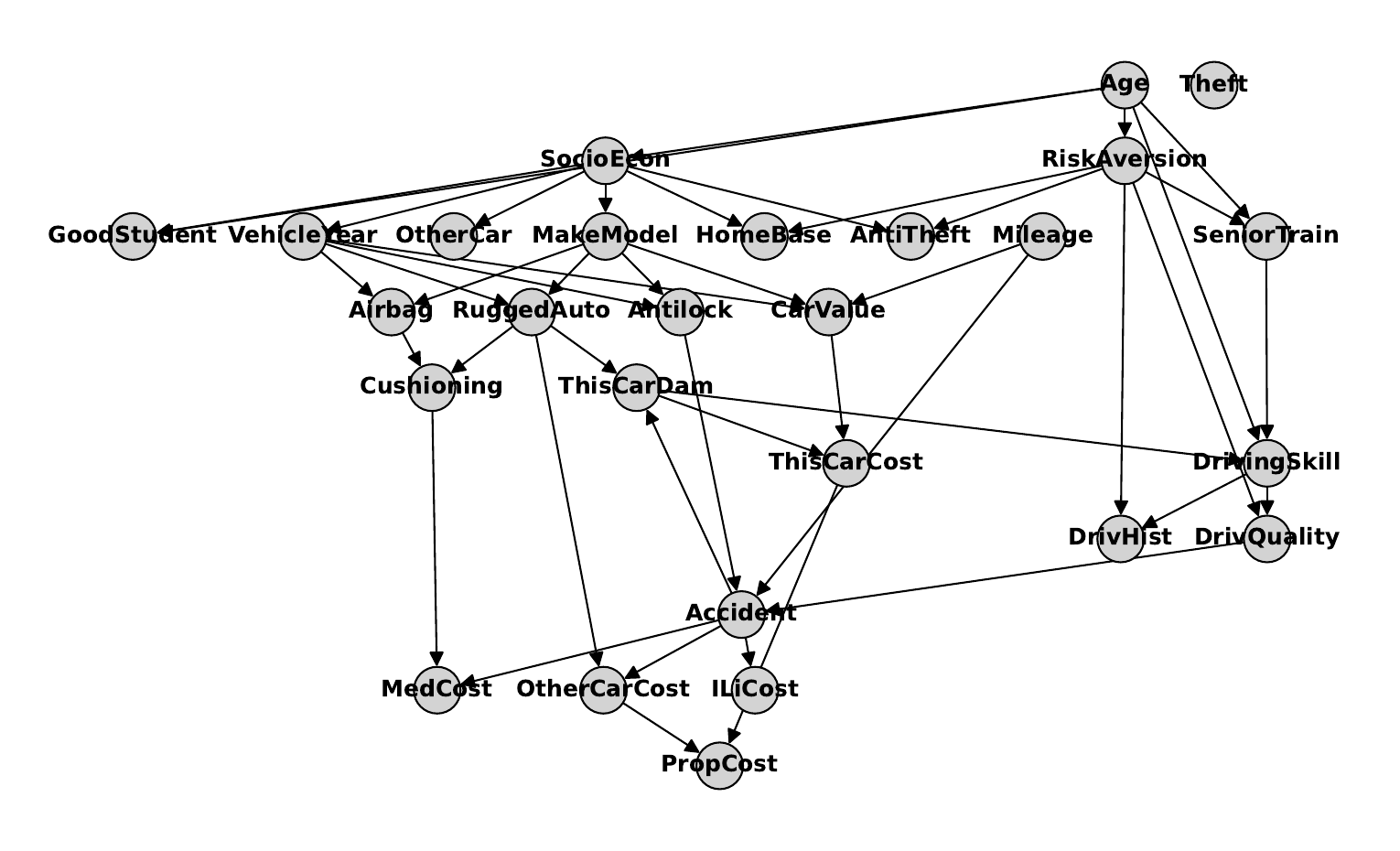}
    \caption{Human's final causal graph for insurance dataset}
    \label{fig:insurance_human}
\end{figure}

\begin{figure}[!htb]
    \centering
    \includegraphics[width=\linewidth]{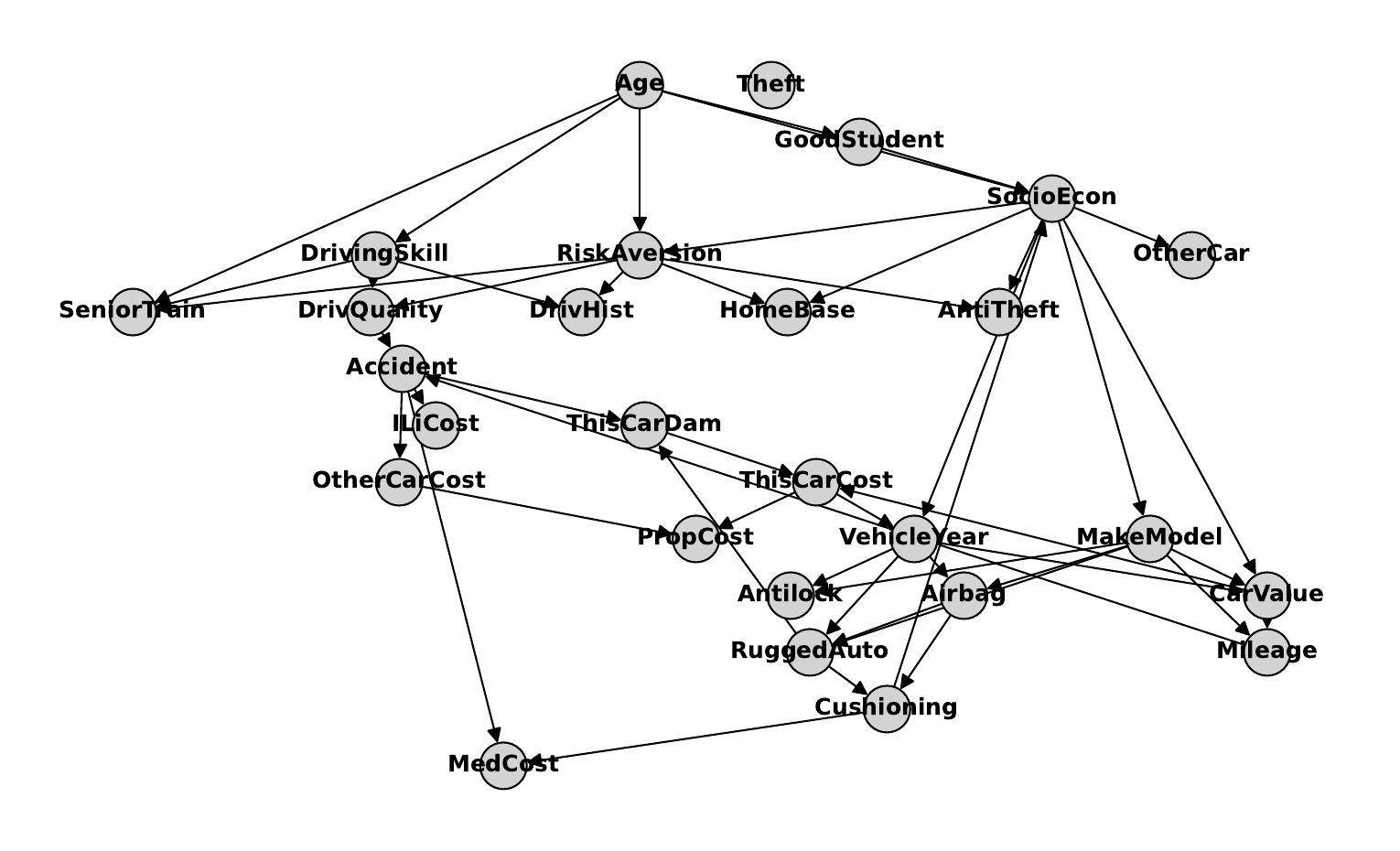}
    \caption{GIT's final causal graph for insurance dataset}
    \label{fig:insurance_git}
\end{figure}

\begin{figure}[!htb]
    \centering
    \includegraphics[width=\linewidth]{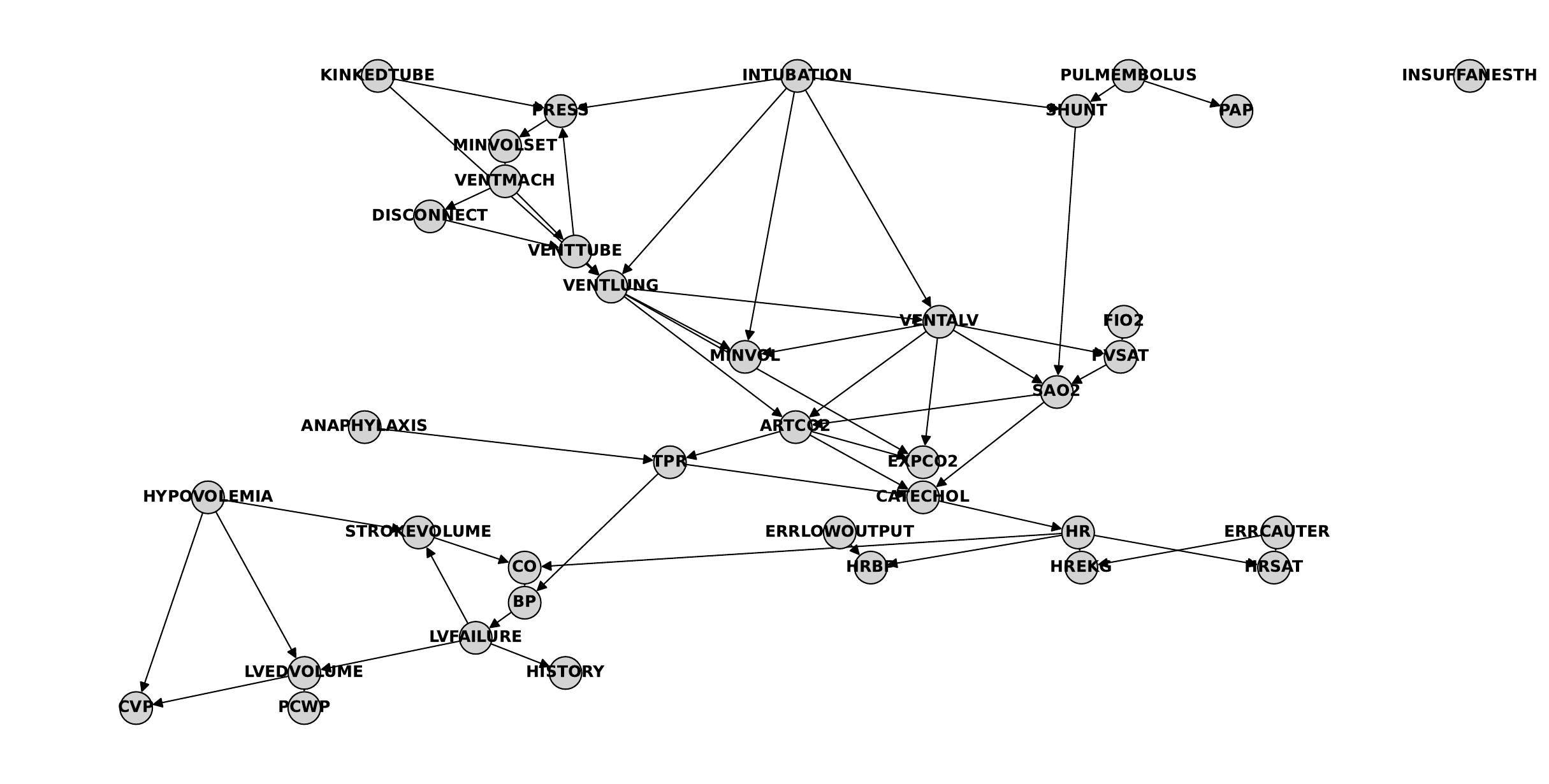}
    \caption{\ours final causal graph for alarm dataset}
    \label{fig:alarm_ours}
\end{figure}

\begin{figure}[!htb]
    \centering
    \includegraphics[width=\linewidth]{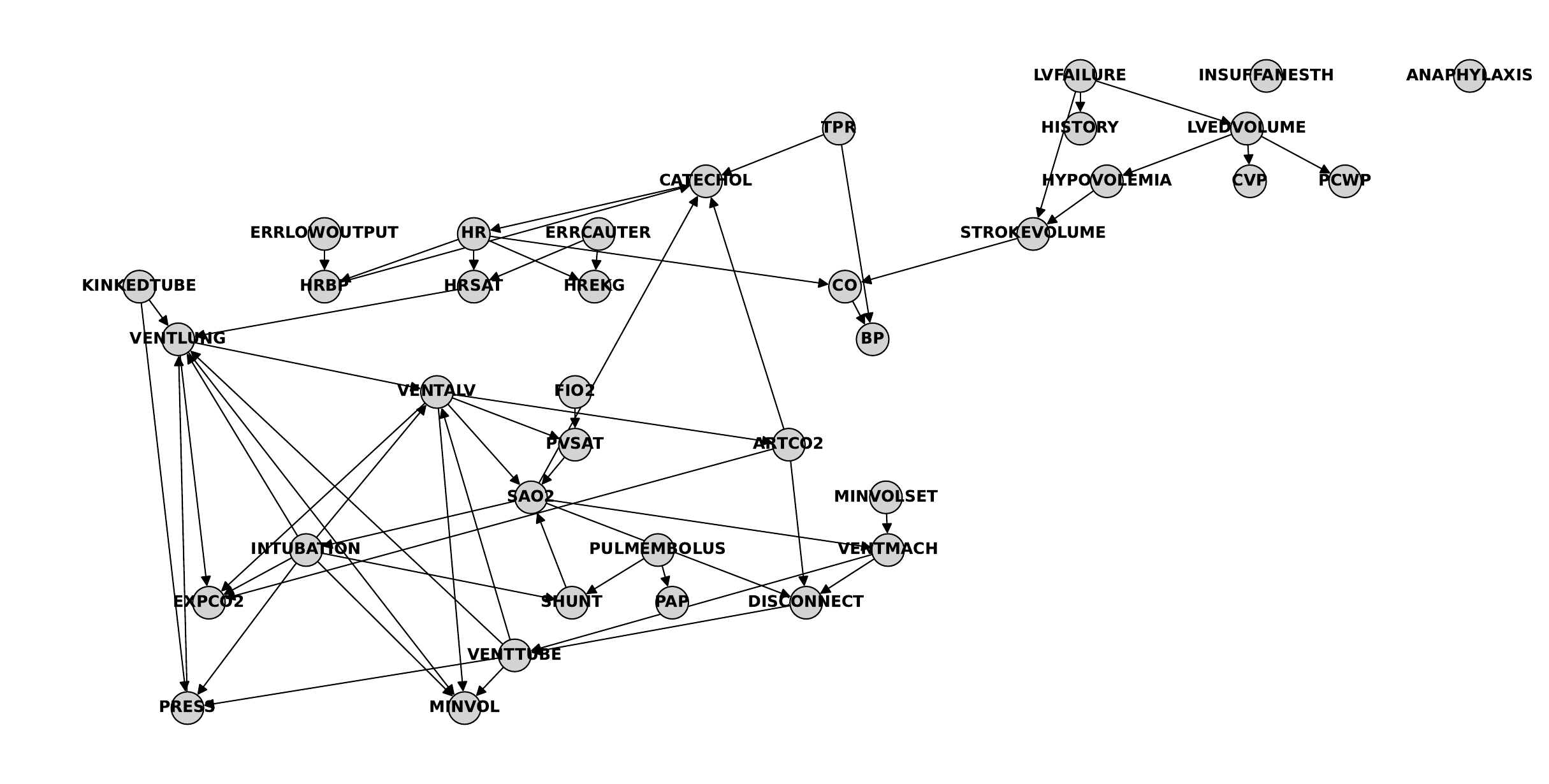}
    \caption{Human's final causal graph for alarm dataset}
    \label{fig:alarm_humans}
\end{figure}

\begin{figure}[!htb]
    \centering
    \includegraphics[width=\linewidth]{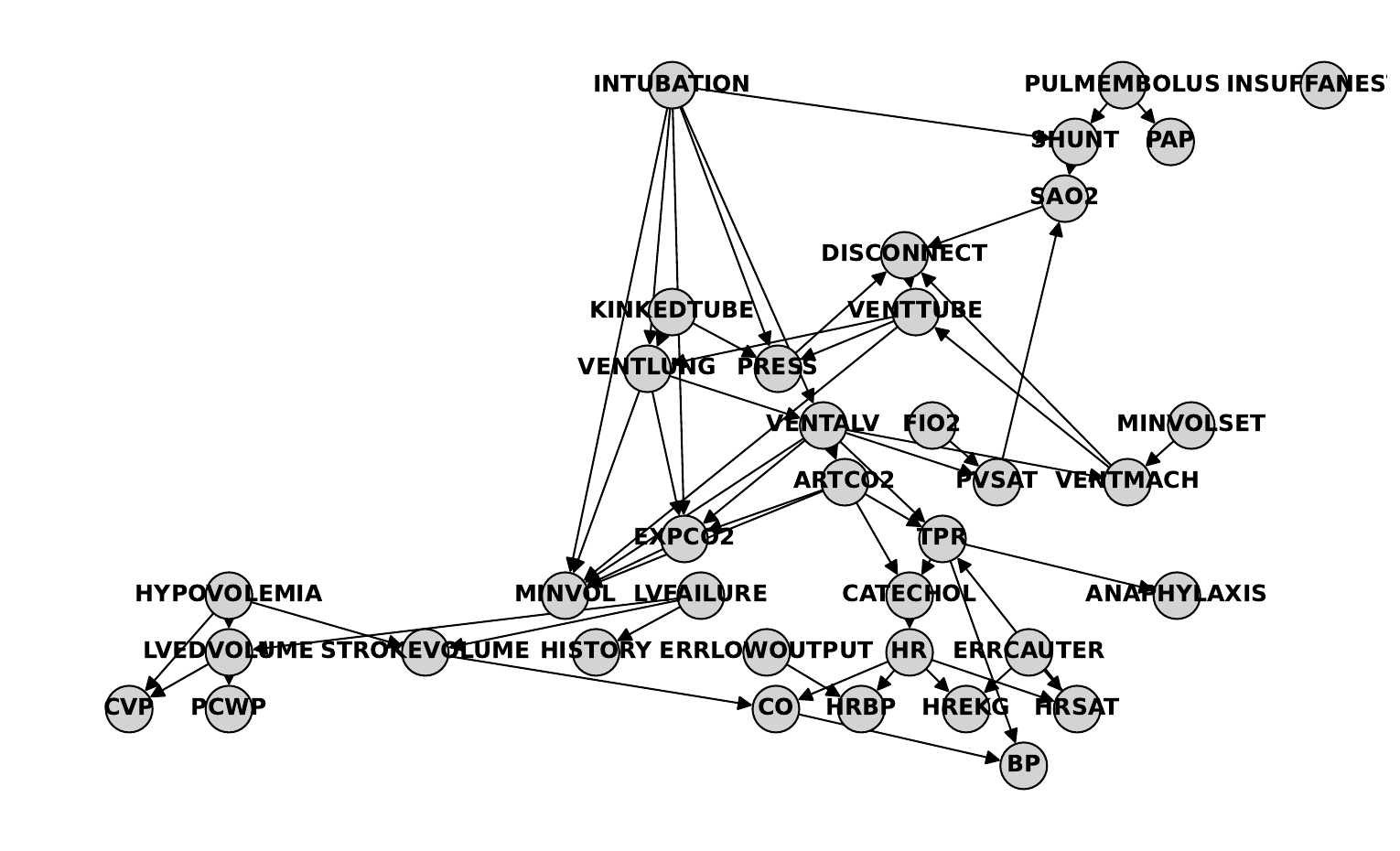}
    \caption{GIT's final causal graph for alarm dataset}
    \label{fig:alarm_git}
\end{figure}

\clearpage

\section{Convergence of causal discovery with \ours}

In this section, we provide a convergence argument for \ours, 
which combines a Large Language Model (LLM) warmup phase with a numerical-based intervention targeting strategy (e.g., GIT \citep{olko2023git}).

\subsection{Preliminaries and Notation}

\paragraph{Structural Causal Models and Online Causal Discovery.}
We use the same definition of Structural Causal Models (SCMs), directed acyclic graphs (DAGs), and single-node interventions (hard interventions) described in, e.g.,~\citep{lippe2022enco}. Suppose we have:
\[
G^* = (V, E^*), 
\]
where $V = \{1,2,\dots,n\}$ indexes the causal variables $(X_1, \dots, X_n)$, and we wish to recover $G^*$. In an online setting, at each round $t = 1,2,\dots,T$, we choose an intervention target $i_t \in V$ and obtain a small batch of interventional samples from $X$ under that intervention. This new interventional data is then used to update our current belief about the causal structure and functional parameters.

\paragraph{LeGIT and the Warmup Stage.}
LeGIT begins with a small number of \emph{warmup} rounds, $T_\text{warmup}$, possibly augmented by an additional \emph{bootstrap} stage $T_\text{bootstrapped}$. Across these initial stages, an LLM proposes intervention targets based on domain-specific descriptions or meta-information about the variables. Once the warmup phases are finished, the algorithm reverts to a purely numerical-based strategy for intervention selection---for example, GIT \citep{olko2023git} or a Bayesian method~\citep{Brouillard2020DifferentiableCD}. Let $I_t$ be the random variable denoting the chosen intervention target at round $t$; then

\[
I_t = 
\begin{cases}
\text{LLM-based selection},\\
\ \ \ \ \ \ \ \ \ \ \ \ \ \ \ \ \ \ \ \ \ \  \ \ \ \  t \le 2*(T_\text{warmup}+T_\text{bootstrapped}),\\
\text{numerical-based selection},\\
\ \ \ \ \ \ \ \ \ \ \ \ \ \ \ \ \ \ \ \ \ \ \ \ \  2*(T_\text{warmup}+T_\text{bootstrapped}) <t \leq T.
\end{cases}
\]

\subsection{Convergence Proof}

Throughout this section, we make the following standard assumptions:

\begin{itemize}
    \item \textbf{A1 (Faithfulness)}. The true distribution is \emph{faithful} to a unique causal DAG $G^*$. 
    \item \textbf{A2 (Sufficiency)}. There are no hidden confounders, i.e., all relevant causal variables are observed.
    \item \textbf{A3 (Convergent Base-Algorithm)}. After collecting a sufficient number of correct interventional data points, the base numerical method (e.g., GIT + ENCO) converges to $G^*$; see \citep{olko2023git,lippe2022enco} for formal statements.
\end{itemize}

Our main theorem shows that, under A1--A3, and given that the LLM-based warmup selects meaningful root-cause or high-influence variables in at least some fraction of the initial rounds, LeGIT converges to the correct graph $G^*$ in the limit of acquiring more interventional data.

\subsection{Key Lemma: Warmup Rounds Provide Informative Interventions}

Let $\phi_{\text{LLM}}$ be the set of selected nodes from the LLM-based warmup. Although LLM selection can be imperfect, we show that with nonzero probability, $\phi_{\text{LLM}}$ contains enough influential/parent nodes to break symmetries or ambiguities in the MEC, thereby accelerating or guaranteeing eventual convergence.

\begin{lemma}[LLM Warmup is Sufficiently Informative]
\label{lemma:Information_Guarantee}
Suppose that, during the warmup stage, the LLM selects a node $v$ which is a direct cause (or ancestor) of at least one child $c$ that is currently ambiguous or misoriented in the model. Intervening on $v$ yields significant new information about the structure among $\{v, c\}$. If the warmup stage includes such interventions on enough distinct parents or high-degree nodes, the post-warmup structure has strictly fewer edges inconsistent with $G^*$ on average. 
\end{lemma}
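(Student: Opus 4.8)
The plan is to exploit the fundamental asymmetry that hard interventions induce in the interventional distribution (Eq.~1): intervening on $v$ alters the marginal of every descendant of $v$ while leaving every non-descendant untouched. I would first formalize the two hypotheses being compared. Let $c$ be a child of $v$ in $G^*$ whose incident edge the current model $\varphi$ either leaves undirected (i.e.\ undetermined within the current Markov equivalence class) or orients as $c \to v$. Under the model's incorrect hypothesis $c \to v$, the predicted post-interventional marginal satisfies $P_\varphi(X_c \mid \mathrm{do}(v)) = P_\varphi(X_c)$, since intervening on a descendant cannot affect an ancestor. Under the truth $v \to c$, Assumption A1 (faithfulness) guarantees that $\mathrm{do}(v)$ propagates to $c$, so the true interventional marginal differs from the observational one. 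The first step is therefore to show this discrepancy is bounded away from zero, e.g.\ by lower-bounding $\KL\!\left(P^*(X_c \mid \mathrm{do}(v)) \,\|\, P^*(X_c)\right) \ge \delta > 0$ using faithfulness; this is exactly the statistical signature the base algorithm is designed to detect, and it is what makes interventions break the symmetry of the MEC.

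Second, I would translate this distributional signal into an edge correction. The interventional batch from $\mathrm{do}(v)$ enters the gradient/likelihood objective of the base discovery algorithm $\gA$ (ENCO). Because the signal has magnitude at least $\delta$, once the batch is large enough the gradient on the orientation parameter of the $v$--$c$ edge points toward $v \to c$; invoking Assumption A3 in the per-round large-sample limit, $\gA$ reorients (or newly orients) that edge consistently with $G^*$, removing one edge from the set inconsistent with $G^*$. I would state this as a one-intervention claim: each warmup intervention on a node that is a true parent or ancestor of a currently ambiguous or misoriented child strictly decreases the count of inconsistent edges.

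Third, I would aggregate over the warmup schedule. Since LLM selection is imperfect, I would introduce the event that a given warmup round targets such an informative node, argue this event has nonzero probability (building on the empirical observation in Fig.~\ref{fig:epoch1} that the LLM locates high-influence root causes), and apply linearity of expectation across the $T_\text{warmup}+T_\text{bootstrapped}$ rounds. Summing the per-intervention reductions yields that the expected number of inconsistent edges after warmup is strictly smaller than at initialization, which is the claimed ``strictly fewer on average.''

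The hard part will be the monotonicity: guaranteeing that reorienting the $v$--$c$ edge does not, as a side effect, introduce new inconsistencies elsewhere in $\varphi$. I would handle this by appealing to the fact that interventional constraints only shrink the set of orientations consistent with the data---they add, never remove, identifiability information---so the feasible set contracts monotonically toward $G^*$; combined with A3 this rules out compensating errors in the large-sample regime. The secondary obstacle is finite-sample noise in estimating $P^*(X_c \mid \mathrm{do}(v))$, which I would absorb into the ``enough samples'' clause by requiring the per-round batch size to exceed a threshold depending on $\delta$, so that the detected signal dominates the sampling fluctuation with high probability.
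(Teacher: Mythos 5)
Your proposal is correct and follows essentially the same route as the paper's own proof, which simply invokes assumptions A1--A2 and ``standard identifiability arguments of single-node interventions'' to conclude that intervening on $v$ removes incorrect edges or orientation errors around $v$. Your write-up is considerably more explicit than the paper's one-paragraph sketch --- in particular the descendant/non-descendant asymmetry of $P(X\mid \mathrm{do}(v))$, the faithfulness-based separation of the two orientation hypotheses, and the aggregation over warmup rounds are exactly the ``standard arguments'' the paper defers to --- so there is no gap to flag.
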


\begin{proof}
When the LLM intervenes on $v$, we obtain data from $P(X \mid do\{v\})$. Under A1--A2, the subsequent updates to the structural parameters will, with high probability, remove incorrect edges around $v$ or correct orientation errors. Repeating this for a sufficient set of $v$ nodes with nontrivial out-degrees ensures that many orientation and adjacency ambiguities in $G$ are resolved. A formal statement follows directly from standard identifiability arguments of single-node interventions. 
\end{proof}

\subsection{Convergence Argument for the Combined Procedure}

After $2(T_{\text{warmup}} + T_{\text{bootstrapped}})$ rounds, the base method (e.g., GIT) takes over and selects all subsequent targets $\{I_t\}_{t >2 ( T_\text{warmup}+T_\text{bootstrapped})}$. The following proposition states that if the base method itself converges when given sufficiently many informative interventions (as assumed by A3), then the combination of warmup + base method must also converge.

\begin{proposition}[Convergence of LeGIT]
\label{prop:Convergence}
Assume the LLM-based warmup stage provides a non-empty set of interventions that reduce critical ambiguities in the causal structure (Lemma~\ref{lemma:Information_Guarantee}). Let the base method be any procedure that is guaranteed to converge to $G^*$ if it acquires sufficiently many samples from the relevant parts of the DAG (A3). Then, as $T \to \infty$, \emph{LeGIT} converges to the correct causal DAG $G^*$ with probability 1.
\end{proposition}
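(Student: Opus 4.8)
The plan is to split the infinite-horizon procedure into its two structural phases---a \emph{finite} warmup (including bootstrap) phase of exactly $2(T_\text{warmup}+T_\text{bootstrapped})$ rounds, followed by an \emph{unbounded} continual-intervention phase in which the base method selects every target---and then to argue that the asymptotic behavior is governed entirely by the base method's guarantee (A3), while the warmup only improves the starting point. First I would observe that since $T_\text{warmup}$ and $T_\text{bootstrapped}$ are fixed constants independent of $T$, the number of rounds handled by the numerical method equals $T - 2(T_\text{warmup}+T_\text{bootstrapped})$, which diverges as $T\to\infty$. Hence the continual phase alone supplies an unbounded interventional dataset.

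Next I would establish a \emph{no-harm} property of the warmup: by Lemma~\ref{lemma:Information_Guarantee}, the interventions proposed during warmup resolve, with positive probability, at least some adjacency or orientation ambiguities, so the model state $\varphi$ passed to the base method at round $2(T_\text{warmup}+T_\text{bootstrapped})$ has, in expectation, no more edges inconsistent with $G^*$ than a cold start---and typically strictly fewer. The point is that the warmup contributes a finite, correctly oriented initialization that can only help; it cannot introduce a systematic bias the base method is unable to undo, because under A1--A2 any incorrect edge remains testable by subsequent single-node interventions.

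The core step is then to invoke A3: once the base method controls target selection, and provided it accumulates sufficiently many informative interventions across the relevant parts of $G^*$, it converges to $G^*$. Since the continual phase runs for infinitely many rounds and the interventional dataset $\mathcal{D}_{int}$ grows without bound, I would show that every relevant variable is eventually intervened upon---formally via a Borel--Cantelli-type argument on the event that each ambiguous node is selected infinitely often---so that the coverage hypothesis of A3 is met and convergence to $G^*$ follows with probability $1$.

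The hard part will be justifying this coverage claim rather than simply assuming it, since it is exactly the empirical failure mode the paper identifies: a purely numerical method such as GIT can become \emph{trapped at its initialization}, repeatedly selecting a strict subset of (often peripheral) nodes and never gathering the interventions A3 requires. The convergence argument therefore hinges on combining the two ingredients nontrivially: Lemma~\ref{lemma:Information_Guarantee} must guarantee that the warmup breaks precisely those initial symmetries that would otherwise trap the base method, so that when the numerical phase begins its informativeness criterion is already pointed at the genuinely ambiguous edges. Making this interaction rigorous---showing that a warmed-up initialization provably escapes the trapping basin, rather than merely doing so empirically---is the step I expect to require the most care, and it will likely need an additional regularity assumption on the base method's selection rule, for instance that it assigns nonzero selection probability to any currently ambiguous edge.
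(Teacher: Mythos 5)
Your proposal follows essentially the same route as the paper's proof: a finite warmup phase whose interventions reduce structural ambiguities via Lemma~\ref{lemma:Information_Guarantee}, followed by an unbounded numerical phase whose convergence to $G^*$ is delegated entirely to assumption A3. The ``hard part'' you single out---verifying that A3's coverage hypothesis is actually met, i.e., that the base method does not remain trapped on a strict subset of nodes---is a genuine gap, but it is one the paper's own proof does not close either: the paper simply asserts that the warmed-up base method ``with high probability, chooses further interventions that refine the partial solution,'' so your Borel--Cantelli argument under an added positive-selection-probability regularity condition is, if anything, more careful than the argument the paper actually gives.
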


\begin{proof}[Proof]
By Lemma~\ref{lemma:Information_Guarantee}, after the LLM warmup stage, the posterior space of graphs is already closer to the true DAG. That is, the number of structural ambiguities or misoriented edges around key high-influence nodes is reduced or completely resolved.

From round $t = 2(T_{\text{warmup}} + T_{\text{bootstrapped}}) + 1$ onward, the intervention targeting is dictated by the base method (e.g., GIT). Under assumption~A3, we know that if GIT (plus the underlying gradient-based causal discovery algorithm like ENCO) is run on a system with sufficiently many informative interventions, it converges to $G^*$. Because the warmup stage of LeGIT has by design intervened on crucial nodes to reduce ambiguities, the base method from that point sees a significantly less confounded or ambiguous search space and, with high probability, chooses further interventions that refine the partial solution until it converges to $G^*$. 

Thus, by the properties of the base method's convergence proof (see \citep{olko2023git}), the entire procedure (LLM warmup + GIT) converges to $G^*$ given sufficient total rounds $T$.
\end{proof}

\begin{remark}[Extension to Other Methods]
Although we have discussed GIT and ENCO as an illustrative example, any gradient-based or Bayesian-based method that ensures correct discovery given a suitable variety of interventions can replace GIT in Step~(b) of LeGIT. Under the same conditions (A1--A3), the combined procedure likewise converges to the true DAG $G^*$.
\end{remark}

\section{Examples of Prompts}
\label{appdx:prompt}

For robust performance, we actually shuffle the order of variable descriptions following the self-consistency prompt skill. We provide the prompt templates and the description of the variables used in \ours below.

\begin{tcolorbox}[colback=blue!5!white,colframe=blue!75!black,title=Asia Warmup Prompt,breakable,
    listing only,
     listing options={
        breaklines=true,
        basicstyle=\ttfamily\small,
        columns=fullflexible}]
You are a helpful assistant and expert in lung disease research. Here are some tips that you can pay attention to:

1. Assess whether there is a direct causal relationship, and consider potential confounding variables that might affect the relationship that could potentially not causal relationship.

2. Distinguish between correlations and causation; verify that correlations are not mistaken for causal relationships.

3. Ensure the correct temporal order of variables; confirm that the cause precedes the effect.

Assuming we can do interventions to all the variables, your job is to assist in designing the best intervention experiments among the following variables to help discover their causal relations:

<dysp>: whether or not the patient has dyspnoea, also known as shortness of breath

<smoke>: whether or not the patient is a smoker

<xray>: whether or not the patient has had a positive chest xray

<lung>: whether or not the patient has lung cancer

<tub>: whether or not the patient has tuberculosis

<asia>: whether or not the patient has recently visited asia

<either>: whether or not the patient has either tuberculosis or lung cancer

<bronc>: whether or not the patient has bronchitis

Assuming we can do interventions to all the variables, given the aforementioned variables and their descriptions, can you **echo your knowledge those variables**, **temporally analyze** their relations, and then **choose the best 4 intervention targets from all the variables** which hopefully are the root causes of the other variables to start our analysis of their causal relations?

Let's think and analyze step by step. Then, provide your final answer (variable names only) within the tags <Answer>...</Answer>, separated by ", ".
\end{tcolorbox}

\begin{tcolorbox}[colback=blue!5!white,colframe=blue!75!black,title=Child Warmup Prompt, breakable,
    listing only,
    listing options={
        breaklines=true,
        basicstyle=\ttfamily\small,
        columns=fullflexible}]
You are a helpful assistant and expert in children's disease research. Here are some tips that you can pay attention to:

1. Assess whether there is a direct causal relationship, and consider potential confounding variables that might affect the relationship that could potentially not causal relationship.

2. Distinguish between correlations and causation; verify that correlations are not mistaken for causal relationships.

3. Ensure the correct temporal order of variables; confirm that the cause precedes the effect.

Assuming we can do interventions to all the variables, your job is to assist in designing the best intervention experiments among the following variables to help discover their causal relations:

<LungFlow>: low blood flow in the lungs

<ChestXray>: having a chest x-ray

<Disease>: infant methemoglobinemia

<Grunting>: grunting in infants

<Age>: age of infant at disease presentation

<XrayReport>: lung excessively filled with blood

<RUQO2>: level of oxygen in the right upper quadriceps muscle

<DuctFlow>: blood flow across the ductus arteriosus

<HypoxiaInO2>: hypoxia when breathing oxygen

<Sick>: presence of an illness

<CO2Report>: a document reporting high level of CO2 levels in blood

<LungParench>: the state of the blood vessels in the lungs

<LVH>: having left ventricular hypertrophy

<LowerBodyO2>: level of oxygen in the lower body

<BirthAsphyxia>: lack of oxygen to the blood during the infant's birth

<CO2>: level of CO2 in the body
<LVHreport>: report of having left ventri

<GruntingReport>: report of infant grunting

<CardiacMixing>: mixing of oxygenated and deoxygenated blood

<HypDistrib>: low oxygen areas equally distributed around the body

Assuming we can do interventions to all the variables, given the aforementioned variables and their descriptions, can you **echo your knowledge those variables**, **temporally analyze** their relations, and then **choose the best 4 intervention targets from all the variables** which hopefully are the root causes of the other variables to start our analysis of their causal relations?

Let's think and analyze step by step. Then, provide your final answer (variable names only) within the tags <Answer>...</Answer>, separated by ", ".
\end{tcolorbox}

\begin{tcolorbox}[colback=blue!5!white,colframe=blue!75!black,title=Insurance Warmup Prompt, breakable,
    listing only,
    listing options={
        breaklines=true,
        basicstyle=\ttfamily\small,
        columns=fullflexible}]
You are a helpful assistant and expert in car insurance risks research. Here are some tips that you can pay attention to:

1. Assess whether there is a direct causal relationship, and consider potential confounding variables that might affect the relationship that could potentially not causal relationship.

2. Distinguish between correlations and causation; verify that correlations are not mistaken for causal relationships.

3. Ensure the correct temporal order of variables; confirm that the cause precedes the effect.

Assuming we can do interventions to all the variables, your job is to assist in designing the best intervention experiments among the following variables to help discover their causal relations:

<ThisCarDam>: damage to the car

<MakeModel>: owning a sports car

<OtherCarCost>: cost of the other cars

<PropCost>: ratio of the cost for the two cars

<AntiTheft>: car has anti-theft

<DrivQuality>: driving quality

<DrivHist>: driving history

<MedCost>: cost of medical treatment

<Mileage>: how much mileage is on the car

<Antilock>: car has anti-lock

<CarValue>: value of the car

<Accident>: severity of the accident

<OtherCar>: being involved with other cars in the accident

<SeniorTrain>: received additional driving training

<ILiCost>: inspection cost

<SocioEcon>: socioeconomic status

<ThisCar>: costs for the insured car

<Theft>: theft occured in the car

<Age>: age

<RuggedAuto>: ruggedness of the car

<GoodStudent>: being a good student driver

<VehicleYear>: year of vehicle

<HomeBase>: neighbourhood type

<ThisCarCost>: costs for the insured car

<Cushioning>: quality of cushinoning in car

<RiskAversion>: being risk averse

<DrivingSkill>: driving skill

<Airbag>: car has an airbad

Assuming we can do interventions to all the variables, given the aforementioned variables and their descriptions, can you **echo your knowledge those variables**, **temporally analyze** their relations, and then **choose the best 5 intervention targets from all the variables** which hopefully are the root causes of the other variables to start our analysis of their causal relations?

Let's think and analyze step by step. Then, provide your final answer (variable names only) within the tags <Answer>...</Answer>, separated by ", ".
\end{tcolorbox}

\begin{tcolorbox}[colback=blue!5!white,colframe=blue!75!black,title=Alarm Warmup Prompt,breakable,
    listing only,
    listing options={
        breaklines=true,
        basicstyle=\ttfamily\small,
        columns=fullflexible}
        ]
        
You are a helpful assistant and expert in alarm message system for patient monitoring system research. Here are some tips that you can pay attention to:

1. Assess whether there is a direct causal relationship, and consider potential confounding variables that might affect the relationship that could potentially not causal relationship.

2. Distinguish between correlations and causation; verify that correlations are not mistaken for causal relationships.

3. Ensure the correct temporal order of variables; confirm that the cause precedes the effect.

Assuming we can do interventions to all the variables, your job is to assist in designing the best intervention experiments among the following variables to help discover their causal relations:

<CATECHOL>: hormone made by the adrenal glands

<SAO2>: oxygen saturation of arterial blood

<VENTALV>: exchange of gas between the alveoli and the external environment

<ANAPHYLAXIS>: sever, life-threatening allergic reaction

<INSUFFANESTH>: whether there is insufficient anesthesia or not

<FIO2>: the concentration of oxygen in the gas mixture being inspired

<BP>: pressure of circulating blood against the walls of blood vessels

<PRESS>: breathing pressure

<VENTTUBE>: whether there is a breathing tube or not

<TPR>: amount of force exerted on circulating blood by vasculature of the body

<CO>: amount of blood pumped by the heart per minute

<PCWP>: pulmonary capillary wedge pressure

<ERRCAUTER>: whether there was an error during cautery or not

<KINKEDTUBE>: whether the chest tube is kinked or not

<PVSAT>: amount of oxygen bound to hemoglobin in the pulmonary artery

<INTUBATION>: process where a healthcare provider inserts a tube through a person's mouth or nose, then down into their trachea

<CVP>: measure of blood pressure in the vena cava

<HYPOVOLEMIA>: condition that occurs when your body loses fluid, like blood or water

<HRBP>: ratio of heart rate and blood pressure

<HREKG>: heart rate displayed on EKG monitor

<PAP>: blood pressure in the pulmonary artery

<EXPCO2>: expelled CO2

<ERRLOWOUTPUT>: error low output

<HISTORY>: previous medical history

<SHUNT>: hollow tube surgically placed in the brain (or occasionally in the spine) to help drain cerebrospinal fluid and redirect it to another location in the body where it can be reabsorbed

<VENTMACH>: the intensity level of a breathing machine

<VENTLUNG>: lung ventilation

<HRSAT>: measure of how much hemoglobin is currently bound to oxygen compared to how much hemoglobin remains unbound

<LVFAILURE>: occurs when there is dysfunction of the left ventricle causing insufficient delivery of blood to vital body organs

<DISCONNECT>: disconnection

<LVEDVOLUME>: amount of blood present in the left ventricle before contraction

<HR>: heart rate

<MINVOLSET>: the amount of time using a breathing machine

<PULMEMBOLUS>: sudden blockage in the pulmonary arteries, the blood vessels that send blood to your lungs

<STROKEVOLUME>: volume of blood pumped out of the left ventricle of the heart during each systolic cardiac contraction

<MINVOL>:  amount of gas inhaled or exhaled from a person's lungs in one minute

<ARTCO2>: arterial carbon dioxide

Assuming we can do interventions to all the variables, given the aforementioned variables and their descriptions, can you **echo your knowledge those variables**, **temporally analyze** their relations, and then **choose the best 4 intervention targets from all the variables** which hopefully are the root causes of the other variables to start our analysis of their causal relations?

Let's think and analyze step by step. Then, provide your final answer (variable names only) within the tags <Answer>...</Answer>, separated by ", ".
\end{tcolorbox}

\begin{tcolorbox}[
    colback=red!5!white,
    colframe=red!75!black,
    title= LeGIT response for Asia dataset,
    breakable,
    listing only,
    listing options={
        breaklines=true,
        basicstyle=\ttfamily\small,
        columns=fullflexible
    }
]
To approach this systematically, let's first echo the knowledge about these variables and their potential relationships, analyze their temporal and causal connections, and then determine the best intervention targets to uncover their causal relations.

\#\#\# Echoing Knowledge of Variables

1. **<smoke>**: Smoking is a well-known risk factor for many respiratory conditions, including bronchitis and lung cancer.

2. **<bronc>**: Bronchitis is an inflammation of the bronchi in the lungs and can be acute or chronic. It can be caused by smoking, among other factors.

3. **<lung>**: Lung cancer is a serious condition that can be caused by factors like smoking and possibly influenced by environmental exposures.

4. **<dysp>**: Dyspnoea, or shortness of breath, can be a symptom of many conditions, including lung cancer, bronchitis, and tuberculosis.

5. **<asia>**: Visiting Asia is included likely due to the higher prevalence of certain diseases, such as tuberculosis, in certain parts of Asia compared to other regions.

6. **<tub>**: Tuberculosis is an infectious disease that primarily affects the lungs but can also affect other parts of the body.

7. **<xray>**: A positive chest x-ray can indicate several conditions, including tuberculosis, lung cancer, and possibly severe bronchitis.

8. **<either>**: This variable indicates the presence of either tuberculosis or lung cancer, which are conditions that can both lead to a positive chest x-ray and symptoms like dyspnoea.

\#\#\# Temporal Analysis

The temporal order of these variables is crucial to understanding their causal relationships. For instance:

- **<smoke>** and **<asia>** are environmental/behavioral exposures that precede health outcomes like **<bronc>**, **<lung>**, and **<tub>**.

- **<bronc>**, **<lung>**, and **<tub>** are health conditions that can lead to symptoms like **<dysp>** and diagnostic findings like **<xray>**.

- **<either>** is a composite variable indicating the presence of **<lung>** or **<tub>**, which directly affects the likelihood of **<xray>** and **<dysp>**.

Given these considerations, the best intervention targets to understand causal relationships would likely be those factors that are upstream in the causal chain and can influence multiple downstream outcomes.

\#\#\# Choosing Best Intervention Targets

1. **<smoke>**: As a primary risk factor for both bronchitis and lung cancer, intervening here can reveal causal links to multiple respiratory conditions.

2. **<asia>**: Given its association with increased risk for tuberculosis, intervening here can help understand the direct impact of geographical exposure on disease risk.

3. **<bronc>**: By intervening on bronchitis, we can explore its direct and indirect effects on symptoms like dyspnoea and diagnostic outcomes like x-rays.

4. **<lung>**: Intervening on lung cancer can clarify its causal links to symptoms and diagnostic findings.

5. **<tub>**: Given tuberculosis's role in respiratory health, interventions here can illuminate its effects on symptoms and diagnostic outcomes.

In summary, focusing on **<smoke>**, **<asia>**, **<bronc>**, **<lung>**, and **<tub>** as intervention targets offers a strategic approach to uncovering the root causes and pathways of respiratory diseases and their symptoms.

<Answer>smoke, asia, bronc, lung, tub</Answer>
\end{tcolorbox}

\begin{tcolorbox}[
    colback=red!5!white,
    colframe=red!75!black,
    title= LeGIT response for Child dataset,
    breakable,
    listing only,
    listing options={
        breaklines=true,
        basicstyle=\ttfamily\small,
        columns=fullflexible
    }
]
To address the problem effectively, we will follow a structured approach to understand the potential causal relationships among the variables. We will start by echoing the knowledge of each variable, analyze their temporal relationships, and then identify the best intervention targets.

\#\#\# Echoing Knowledge of Variables

1. **RUQO2**: Level of oxygen in the right upper quadriceps muscle.

2. **LVHreport**: Report of having left ventricular hypertrophy.

3. **BirthAsphyxia**: Lack of oxygen to the blood during the infant's birth.

4. **DuctFlow**: Blood flow across the ductus arteriosus.

5. **Disease**: Infant methemoglobinemia.

6. **LungParench**: The state of the blood vessels in the lungs.

7. **CO2Report**: A document reporting high levels of CO2 in blood.

8. **LungFlow**: Low blood flow in the lungs.

9. **GruntingReport**: Report of infant grunting.

10. **Age**: Age of infant at disease presentation.

11. **LowerBodyO2**: Level of oxygen in the lower body.

12. **ChestXray**: Having a chest x-ray.

13. **CO2**: Level of CO2 in the body.

14. **HypDistrib**: Low oxygen areas equally distributed around the body.

15. **Grunting**: Grunting in infants.

16. **XrayReport**: Lung excessively filled with blood.

17. **LVH**: Having left ventricular hypertrophy.

18. **CardiacMixing**: Mixing of oxygenated and deoxygenated blood.

19. **HypoxiaInO2**: Hypoxia when breathing oxygen.

20. **Sick**: Presence of an illness.

\#\#\# Temporal Analysis

1. **BirthAsphyxia** is an early event that can lead to multiple downstream effects, such as hypoxia and potential cardiac issues.

2. **CardiacMixing** and **DuctFlow** are congenital or physiological conditions that can affect oxygenation and blood flow.

3. **HypoxiaInO2**, **LungFlow**, and **LowerBodyO2** are likely influenced by earlier conditions like **BirthAsphyxia**, **CardiacMixing**, and **DuctFlow**.

4. **CO2** and **CO2Report** are indicators of respiratory function and can be affected by lung and cardiac issues.

5. **LungParench**, **XrayReport**, and **ChestXray** are diagnostic or descriptive of lung conditions.

6. **Grunting** and **GruntingReport** are symptoms that may arise from respiratory distress.

7. **LVH** and **LVHreport** are indicative of cardiac stress or abnormalities, potentially secondary to other physiological issues.

8. **Disease** (infant methemoglobinemia) could be a result of hypoxic conditions and other metabolic issues.

9. **Sick** is a general indicator of illness and can be influenced by multiple preceding factors.

\#\#\# Choosing Intervention Targets

To identify root causes, we need to target variables that likely influence many others and are early in the causal chain:

1. **BirthAsphyxia**: Early event potentially causing hypoxia, cardiac stress, and other downstream effects.

2. **CardiacMixing**: Affects oxygenation and can lead to hypoxia and other cardiac issues.

3. **DuctFlow**: Influences blood flow and oxygenation, impacting many other variables.

4. **LungFlow**: Directly affects oxygenation and can lead to hypoxia and respiratory distress.

These variables are likely to be root causes or early influencers in the causal network, making them ideal intervention targets to explore causal relationships.

\#\#\# Final Answer

<Answer>BirthAsphyxia, CardiacMixing, DuctFlow, LungFlow</Answer>
\end{tcolorbox}

\begin{tcolorbox}[
    colback=red!5!white,
    colframe=red!75!black,
    title= LeGIT response for Insurance dataset,
    breakable,
    listing only,
    listing options={
        breaklines=true,
        basicstyle=\ttfamily\small,
        columns=fullflexible
    }
]
To analyze the causal relationships among these variables and select the best intervention targets, let's first categorize and understand the potential causal structure among them.

\#\#\# Step 1: Echoing Knowledge of Variables

1. **<ThisCar>**: Costs for the insured car

2. **<Mileage>**: Mileage on the car

3. **<DrivQuality>**: Driving quality

4. **<RiskAversion>**: Being risk-averse

5. **<CarValue>**: Value of the car

6. **<MakeModel>**: Owning a sports car

7. **<ThisCarDam>**: Damage to the car

8. **<Theft>**: Theft occurred in the car

9. **<ILiCost>**: Inspection cost

10. **<Age>**: Age

11. **<OtherCar>**: Involvement with other cars in an accident

12. **<SeniorTrain>**: Received additional driving training

13. **<SocioEcon>**: Socioeconomic status

14. **<VehicleYear>**: Year of vehicle

15. **<HomeBase>**: Neighborhood type

16. **<DrivHist>**: Driving history

17. **<Accident>**: Severity of the accident

18. **<Airbag>**: Car has an airbag

19. **<AntiTheft>**: Car has anti-theft

20. **<ThisCarCost>**: Costs for the insured car

21. **<MedCost>**: Cost of medical treatment

22. **<Antilock>**: Car has anti-lock

23. **<GoodStudent>**: Being a good student driver

24. **<OtherCarCost>**: Cost of the other cars

25. **<Cushioning>**: Quality of cushioning in the car

26. **<RuggedAuto>**: Ruggedness of the car

27. **<PropCost>**: Ratio of the cost for the two cars

28. **<DrivingSkill>**: Driving skill

\#\#\# Step 2: Temporal Analysis of Relations

- **Demographic and Personal Factors**:

  - **<Age>**, **<RiskAversion>**, **<SocioEcon>**, **<GoodStudent>**: These are inherent traits or long-term conditions that can influence driving behavior and decisions.
  
- **Vehicle Characteristics**:

  - **<VehicleYear>**, **<MakeModel>**, **<CarValue>**, **<Airbag>**, **<AntiTheft>**, **<Antilock>**, **<Cushioning>**, **<RuggedAuto>**: These are intrinsic to the car and can influence safety and risk but are typically decided before other events.

- **Driving Experience and Skills**:

  - **<DrivQuality>**, **<DrivingSkill>**, **<DrivHist>**, **<SeniorTrain>**: These can evolve over time and influence accident likelihood.

- **Environmental Factors**:

  - **<HomeBase>**: Affects exposure to theft and accidents.

- **Incident and Cost Factors**:

  - **<ThisCarDam>**, **<Theft>**, **<Accident>**, **<ILiCost>**, **<ThisCarCost>**, **<MedCost>**, **<OtherCarCost>**, **<PropCost>**: These are outcomes or costs associated with incidents.

\#\#\# Step 3: Choosing Intervention Targets

To identify root causes, we should consider intervening on variables that are likely to influence many others and are not themselves primarily effects of other variables. The best interventions would be on variables that are upstream in the causal chain and can potentially change multiple downstream outcomes.

1. **<RiskAversion>**: Can influence driving behavior, choice of car, and safety measures.

2. **<SocioEcon>**: Affects car choice, home base, and potentially driving habits.

3. **<MakeModel>**: Influences car value, safety features, and possibly driving style.

4. **<DrivingSkill>**: Directly impacts driving quality, accident likelihood, and associated costs.

These variables are chosen because they are fundamental characteristics or decisions that precede and potentially influence many other variables in the system.

\#\#\# Final Answer

<Answer>RiskAversion, SocioEcon, MakeModel, DrivingSkill</Answer>
\end{tcolorbox}

\begin{tcolorbox}[
    colback=red!5!white,
    colframe=red!75!black,
    title= LeGIT response for Insurance dataset,
    breakable,
    listing only,
    listing options={
        breaklines=true,
        basicstyle=\ttfamily\small,
        columns=fullflexible
    }
]
To determine the best intervention targets, we need to consider the relationships and possible temporal orders among the variables. Here's a step-by-step analysis:

\#\#\# Step 1: Understanding the Variables

1. **Respiratory and Ventilation Variables:**

   - **<VENTMACH>**: Intensity level of a breathing machine.
   
   - **<MINVOLSET>**: Time using a breathing machine.
   
   - **<VENTLUNG>**: Lung ventilation.
   
   - **<PRESS>**: Breathing pressure.
   
   - **<MINVOL>**: Minute volume; gas inhaled/exhaled per minute.
   
   - **<VENTTUBE>**: Presence of a breathing tube.
   
   - **<VENTALV>**: Gas exchange in alveoli.
   
   - **<DISCONNECT>**: Disconnection.
   
   - **<EXPCO2>**: Expelled CO2.
   
   - **<ARTCO2>**: Arterial carbon dioxide.
   
   - **<FIO2>**: Oxygen concentration in inspired gas.

2. **Cardiovascular Variables:**

   - **<HREKG>**: Heart rate on EKG.
   
   - **<PCWP>**: Pulmonary capillary wedge pressure.
   
   - **<HRBP>**: Ratio of heart rate and blood pressure.
   
   - **<BP>**: Blood pressure.
   
   - **<CO>**: Cardiac output.
   
   - **<HR>**: Heart rate.
   
   - **<TPR>**: Total peripheral resistance.
   
   - **<STROKEVOLUME>**: Blood volume per heart contraction.
   
   - **<LVEDVOLUME>**: Blood in left ventricle before contraction.
   
   - **<PAP>**: Pulmonary artery pressure.
   
   - **<CVP>**: Central venous pressure.
   
   - **<SAO2>**: Oxygen saturation of arterial blood.
   
   - **<PVSAT>**: Oxygen saturation in pulmonary artery.
   
   - **<HRSAT>**: Hemoglobin oxygen saturation.
   
   - **<CATECHOL>**: Adrenal hormones.
   
3. **Medical Conditions and Procedures:**

   - **<ANAPHYLAXIS>**: Severe allergic reaction.
   
   - **<INTUBATION>**: Insertion of a breathing tube.
   
   - **<INSUFFANESTH>**: Insufficient anesthesia.
   
   - **<PULMEMBOLUS>**: Pulmonary embolism.
   
   - **<HYPOVOLEMIA>**: Fluid loss condition.
   
   - **<LVFAILURE>**: Left ventricle failure.
   
   - **<SHUNT>**: Drainage tube for cerebrospinal fluid.
   
   - **<KINKEDTUBE>**: Kinked chest tube.

4. **Errors and Anomalies:**

   - **<ERRLOWOUTPUT>**: Low output error.
   
   - **<ERRCAUTER>**: Cautery error.

5. **Other:**
   - **<HISTORY>**: Previous medical history.

\#\#\# Step 2: Temporal Analysis

- **<INTUBATION>** and **<VENTMACH>** are likely initial interventions that can influence many respiratory and cardiovascular variables.

- **<ANAPHYLAXIS>**, **<PULMEMBOLUS>**, and **<HYPOVOLEMIA>** are acute conditions that can cause significant changes in cardiovascular and respiratory variables.

- **<INSUFFANESTH>** can affect heart rate and blood pressure due to inadequate sedation.

- **<VENTLUNG>**, **<VENTALV>**, and **<MINVOL>** are outcomes of ventilation interventions.

\#\#\# Step 3: Choosing the Best Intervention Targets

To determine causality effectively, we want to target variables that are likely root causes or significant influencers:

1. **<INTUBATION>**: This is a direct intervention that can affect many respiratory variables.

2. **<VENTMACH>**: Controls the intensity of mechanical ventilation, influencing respiratory and possibly cardiovascular variables.

3. **<ANAPHYLAXIS>**: A condition that can have widespread effects on cardiovascular and respiratory systems.

4. **<HYPOVOLEMIA>**: Can significantly impact cardiovascular variables like blood pressure and cardiac output.

These interventions can provide insights into the causal relationships within the system.

\#\#\# Final Answer

<Answer>INTUBATION, VENTMACH, ANAPHYLAXIS, HYPOVOLEMIA</Answer>
\end{tcolorbox}


\end{document}